\let\oldmarginnote\marginnote
\renewcommand*{\marginnote}[1]{%
   \begingroup%
   \ifodd\value{page}
     \if@firstcolumn\reversemarginpar\fi
   \else
     \if@firstcolumn\else\reversemarginpar\fi
   \fi
   \oldmarginnote{#1}%
   \endgroup%
}
\icmltitlerunning{Will it Blend? Composing Value Functions in Reinforcement Learning}
\DeclareMathOperator*{\argmax}{argmax}
\newtheorem{theorem}{Theorem}
\newtheorem{corollary}{Corollary}
\newtheorem{lemma}{Lemma}
\theoremstyle{definition}
\newtheorem{definition}{Definition}
\DeclarePairedDelimiterX{\setarg}[1]{\{}{\}}{%
  \ifnum\currentgrouptype=16 \else\begingroup\fi
  \activatebar#1
  \ifnum\currentgrouptype=16 \else\endgroup\fi
}
\DeclarePairedDelimiterX{\expectarg}[1]{[}{]}{%
  \ifnum\currentgrouptype=16 \else\begingroup\fi
  \activatebar#1
  \ifnum\currentgrouptype=16 \else\endgroup\fi
}
\DeclarePairedDelimiterX{\probarg}[1]{(}{)}{%
  \ifnum\currentgrouptype=16 \else\begingroup\fi
  \activatebar#1
  \ifnum\currentgrouptype=16 \else\endgroup\fi
}
\newcommand{\innermid}{\nonscript\;\delimsize\vert\nonscript\;}
\newcommand{\activatebar}{%
  \begingroup\lccode`\~=`\|
  \lowercase{\endgroup\let~}\innermid 
  \mathcode`|=\string"8000
}
\begin{document} 

\twocolumn[
\icmltitle{Will it Blend? \\ Composing Value Functions in Reinforcement Learning}



\icmlsetsymbol{equal}{*}

\begin{icmlauthorlist}
\icmlauthor{Benjamin van Niekerk}{equal,wits}
\icmlauthor{Steven James}{equal,wits}
\icmlauthor{Adam Earle}{wits}
\icmlauthor{Benjamin Rosman}{wits,csir}

\end{icmlauthorlist}

\icmlaffiliation{wits}{University of the Witwatersrand, Johannesburg, South Africa}
\icmlaffiliation{csir}{Council for Scientific and Industrial Research, Pretoria, South Africa}

\icmlcorrespondingauthor{Benjamin van Niekerk}{benjamin.vanniekerk@students.wits.ac.za}

\icmlkeywords{reinforcement learning, life long learning, composition, entropy-regularised reinforcement learning}

\vskip 0.3in
]



\printAffiliationsAndNotice{} 

\begin{abstract} 

An important property for lifelong-learning agents is the ability to combine existing skills to solve unseen tasks.
In general, however, it is unclear how to compose skills in a principled way. We provide a ``recipe'' for optimal value function composition in entropy-regularised reinforcement learning (RL) and then extend this to the standard RL setting.
Composition is demonstrated in a video game environment, where an agent with an existing library of policies is able to solve new tasks without the need for further learning.

\end{abstract} 

\section{Introduction}

A major challenge in artificial intelligence is creating agents capable of leveraging existing knowledge for inductive transfer.
Lifelong learning, in particular, requires that an agent be able to act effectively when presented with a new, unseen task.
A promising approach is to combine behaviours learned in various separate tasks to create new skills \cite{taylor09}.
This compositional approach allows us to build rich behaviours from relatively simple ones, resulting in a (good!) combinatorial explosion in the agent's abilities \citep{saxe17}.
However, in general, it is unclear how to produce new optimal skills from known ones.

One approach to compositionality is Linearly-solvable Markov Decision Processes (LMDPs) \citep{todorov07}, which structure the reward function to ensure that the Bellman equation becomes linear in the exponentiated value function. 
\citet{todorov09} proves that the optimal value functions of a set of LMDPs can be composed to produce the optimal value function for a composite task.
This is a particularly attractive property, since solving new tasks requires no further learning.
However, the LMDP framework has  so far been restricted to the tabular case with known dynamics, limiting its usefulness.  

Related work has focused on entropy-regularised reinforcement learning (RL) \citep{schulman17, haarnoja17,nachum17}, where rewards are augmented with an entropy-based penalty term. 
This has been shown to lead to improved exploration and rich, multimodal value functions.

Prior work \citep{haarnoja18} has demonstrated that these value functions can be composed to \textit{approximately} solve the intersection of tasks.
We complement these results by proving \textit{optimal} composition for the union of tasks in the total-reward, absorbing-state setting.
Thus, any task lying in the ``span'' of a set of basis tasks can be solved immediately, without any further learning.
We provide a ``recipe'' for optimally composing value functions, and demonstrate our method in a video game. 
Results show that an agent is able to compose existing policies learned from pixel input to generate new, optimal behaviours.

\section{Background}

A Markov decision process (MDP) is defined by the $4$-tuple $(\mathcal{S}, \mathcal{A}, \rho, r)$ where 
(i) the \textit{state space} $\mathcal{S}$ is standard Borel; 
(ii) the \textit{action space} $\mathcal{A}$ is finite (and therefore a compact metric space when equipped with the discrete metric); 
(iii) the transition dynamics $\rho$ define  a Markov kernel $(s, a) \mapsto \rho_{(s, a)}$ from $\mathcal{S} \times \mathcal{A}$ to $\mathcal{S}$; and 
(iv) the reward $r$ is a real-valued function on $\mathcal{S} \times \mathcal{A}$ that is bounded and measurable.

In RL, an agent's goal is to maximise its utility by making a sequence of decisions. 
At each time step, the agent receives an observation from $\mathcal{S}$ and executes an action from $\mathcal{A}$ according to its \textit{policy}. 
As a consequence of its action, the agent receives feedback (reward) and transitions to a new state.
Whereas the rewards represent only the immediate outcome, the utility captures the long-term consequences of actions.
Historically, many utility functions have been investigated \citep{puterman14}, but in this paper we only consider the total-reward criterion (see Section~\ref{sec:entropy_regularised}).

We consider the class of MDPs with an absorbing set $\mathcal{G}$, which is a Borel subset of the state space. We augment the state space with a virtual state $g$ such that $\rho_{(s, a)}(\{g\}) = 1$ for all $(a, s)$ in $\mathcal{G} \times \mathcal{A}$, and $r = 0$ after reaching $g$.
In the control literature, this class of MDPs is often called stochastic shortest path problems  \citep{bertsekas91}, and naturally model domains that terminate after the agent achieves some goal.

We restrict our attention to stationary Markov policies, or simply policies. 
A policy $s \mapsto \pi_s$ is a Markov kernel from $\mathcal{S}$ to $\mathcal{A}$.
Together with an initial distribution $\nu$ over $\mathcal{S}$, a policy defines a probability measure over trajectories.
To formalise this, we construct the set of $n$-step histories inductively by defining $\mathcal{H}_0 = \mathcal{S}$ and $\mathcal{H}_{n} = \mathcal{H}_{n-1} \times \mathcal{A} \times \mathcal{S}$ for $n$ in $\mathbb{N}$. 
The $n$-step histories represent the set of all possible trajectories of length $n$ in the MDP. 
The probability measure on $\mathcal{H}_n$ induced by the policy $\pi$ is then
\[
P^\pi_{\nu,n} = \nu \otimes \underbrace{\pi \otimes \rho \otimes \cdots \otimes \pi \otimes \rho}_{n \text{ times}}.
\]
Using the standard construction \citep{klenke95}, we can define a unique probability measure $P^\pi_\nu$ on $\mathcal{H}_\infty$ consistent with the measures $P^\pi_{\nu, n}$ in the sense that
\[
P^\pi_\nu(\mathcal{E} \times \mathcal{A} \times \mathcal{S} \times \mathcal{A} \times \cdots) = P^\pi_{\nu, n}(\mathcal{E}),
\]
for any $n$ in $\mathbb{N}$ and any Borel set $\mathcal{E} \subseteq \mathcal{H}_n$.
If $\nu$ is concentrated on a single state $s$, we simply write $P^\pi_\nu = P^\pi_s$. 
Additionally for any real-valued bounded measurable function $f$ on $\mathcal{H}_n$, we define $\mathbb{E}^\pi_\nu[f]$ to be the expected value of $f$ under $P^\pi_\nu$.  

Finally, we introduce the notion of a {\emph{proper policy}}---a policy under which the probability of reaching $\mathcal{G}$ after $n$ steps converges to $1$ uniformly over $\mathcal{S}$ as $n \to \infty$.
Our definition extends that of \citet{bertsekas95} to general state spaces, and is equivalent to the definition of transient policies used by \citet{james06}:
\begin{definition}
A stationary Markov policy $\pi$ is said to be \textit{proper} if
\[
\sup_{s \in \mathcal{S}} \sum_{t = 0}^\infty P^\pi_s(s_t \not \in \mathcal{G}) < \infty.
\]
Otherwise, we say that $\pi$ is improper.
\end{definition}

\subsection{Entropy-Regularised RL} \label{sec:entropy_regularised}

In the standard RL setting, the expected reward at state $s$ under policy $\pi$ is given by $\mathbb{E}_{a\sim\pi}\left[r(s, a)\right]$.  
Entropy-regularised RL \citep{ziebert10,fox15,haarnoja17,schulman17,nachum17} augments the reward function with a term that penalises deviating from some reference policy $\bar{\pi}$.
That is, the expected reward is given by  $\mathbb{E}_{a\sim\pi}\left[r(s, a)\right] - \tau \text{KL}[\pi_s||\bar{\pi}_s]$, where $\tau$ is a positive scalar temperature parameter and $\text{KL}[\pi_s||\bar{\pi}_s]$ is the Kullback-Leibler divergence between $\pi$ and the reference policy $\bar{\pi}$ at state $s$.
When $\bar{\pi}$ is the uniform random policy, the regularised reward is equivalent to the standard entropy bonus up to an additive constant \citep{schulman17}. 
This results in policies that are more robust to ``winner's curse'' \citep{fox15}.
Additionally, the reference policy can be used to encode prior knowledge through expert demonstration.  


Based on the above regularisation, we define the $n$-step value function starting from $s$ and following policy $\pi$ as:
\[
V_{\pi, n}(s) = \mathbb{E}^\pi_{s, n} \left[ \sum^{n-1}_{t = 0} r(s_t, a_t) - \tau \text{KL}[\pi_{s_t}||\bar{\pi}_{s_t}] \right].
\]
Note that since the KL-divergence term is measurable \citep[Lemma~1.4.3]{dupuis11}, $V_{\pi, n}$ is well-defined. The infinite-horizon value function, which represents the total expected return after executing $\pi$ from $s$, is then
\[
V_{\pi}(s) = \limsup_{n \to \infty} V_{\pi, n}(s).
\]
Since the reward function and KL-divergence are bounded,\footnote{Under the assumptions that $\mathcal{A}$ is finite and $\bar{\pi}$ is chosen so that $\pi_s$ is absolutely continuous with respect to $\bar{\pi}_s$ for any state $s$ and policy $\pi$.} $V_{\pi}$ is well defined.
Similarly, we define the $Q$-function to be the expected reward after taking action $a$ in state $s$, and thereafter following policy $\pi$:  
\begin{equation} \label{eq:q-function}
Q_\pi(s,a) = r(s,a) + \int_{\mathcal{S}} V_\pi(s^\prime) \rho_{(a,s)}(ds^\prime).
\end{equation} 

Given the definitions above, we say that a measurable function $V^*$ is optimal if $V^*(s) = \sup_\pi V_\pi(s)$ for all $s$ in $\mathcal{S}$.
Furthermore, a policy $\pi^*$ is optimal if $V_{\pi^*} = V^*$.

In the standard RL case, the optimal policy is always deterministic and is defined by $\argmax_a Q^*(s, a)$.
On the other hand, entropy-regularised problems may not admit an optimal deterministic policy.
This results from the KL-divergence term, which penalises deviation from the reference policy $\bar{\pi}$.
If $\bar{\pi}$ is stochastic, then a deterministic policy may incur more cost than a stochastic policy.
To see this, consider the simple two-state MDP shown in Figure~\ref{fig:counter}:

\begin{figure}[h]
\label{fig:counter-example}
\centering
\begin{tikzpicture}[shorten >=1pt,node distance=4cm,on grid,thick]
  \node[state] (s) {$s$};
  \node[state, accepting] (g)[right=of s] {$g$};
  \path[->] (s) edge node [above] {$r(s, \texttt{Right}) = -1$} (g) 
                   edge [loop left] node [left] {$r(s, \texttt{Left}) = -1$} ();                
\end{tikzpicture}
\caption{A two-state MDP with absorbing state $g$. } \label{fig:counter}
\end{figure}
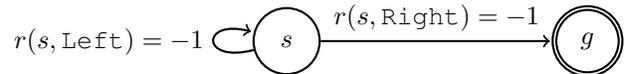

Given $\tau > 0$ and a uniformly random reference policy, let $\pi$ be the deterministic policy that selects \texttt{Right} with probability $1$, and let $\pi_\varepsilon$ be the stochastic policy that selects \texttt{Right} with probability $1 - \varepsilon$ and \texttt{Left} with probability $\varepsilon$.
Then, choosing $\varepsilon$ small enough, we can guarantee that $V_{\pi_\varepsilon} > V_\pi$.
Therefore for any $\tau > 0$, the optimal policy is non-deterministic.

\begin{proof}
First, the value of state $s$ under the policy $\pi$ is given by $V_\pi(s) = -1 - \tau \log 2$. On the other hand, the expected number of steps from $s$ to $g$ under $\pi_\varepsilon$ is $1/(1- \varepsilon)$ so we have
\[
V_{\pi_\varepsilon}(s) = -\frac{1 + \varepsilon\log 2\varepsilon }{1 - \varepsilon} - \log 2(1 - \varepsilon).
\]
Now, choose $\varepsilon$ such that
\begin{equation}
\label{eq:counter-example-epsilon}
\varepsilon < \frac{\tau(\log2 - 1/2)}{2 + \tau}.
\end{equation}
Then, from \eqref{eq:counter-example-epsilon} we have that:
\begin{enumerate}[nolistsep, label=(\roman*)]
\item $\varepsilon < 1/2$ and therefore $\log 2(1-\varepsilon) < 0$,
\item $\log 2\varepsilon < 2 \varepsilon$, and
\item $\log 2 - 1/2 < 1$ and therefore $\varepsilon < \tau / (2 + \tau)$
\end{enumerate}

Using the above facts we get the chain of inequalities:
\begin{align*}
V_{\pi_\varepsilon}(s) &\overset{\text{(i)}} {>} -\frac{1 + \varepsilon\log(2\varepsilon)}{1 - \varepsilon} \overset{\text{(ii)}}{>} -\frac{1 + 2\varepsilon}{1 - \varepsilon} \\
&\overset{\text{(iii)}}{>} -1 - \tau/2 - \varepsilon(2 + \tau) \\ &\overset{\text{(2)}}{>} -1 - \tau \log 2.
\end{align*}
The last inequality follows directly from \eqref{eq:counter-example-epsilon}, giving $V_{\pi_\varepsilon}(s) > V_\pi(s)$.
\end{proof}

\section{Soft Value and Policy Iteration} \label{sec:value_function}

In this section, we investigate the total-reward, entropy-regularised criterion defined above.
While value and policy iteration in entropy-regularised RL have been analysed previously \citep{nachum17}, convergence results are limited to discounted MDPs.
We sketch an argument that an optimal proper policy exists under the total-reward criterion and that the soft versions of value and policy iteration (see Algorithms \ref{alg:value-iteration} and \ref{alg:policy-iteration}) converge to optimal solutions.

We begin by defining the Bellman operators:
\begin{align}
\label{eq:bellman-op}
[\mathcal{T}_\pi V_\pi](s) &= \int_\mathcal{A} Q_\pi(s,a) \pi_s(da)  - \tau \text{KL}[\pi_s||\bar{\pi}_s], \\
\label{eq:bellman-optimality-op}
[\mathcal{T}V](s) &= \sup_{\pi} [\mathcal{T}_\pi V](s).
\end{align}
Equations \eqref{eq:bellman-op} and \eqref{eq:bellman-optimality-op} are analogous to the standard Bellman operator and Bellman optimality operator respectively.
Note that since the optimal policy may not be deterministic, the Bellman optimality operator selects the supremum over policies instead of actions.

We also define the soft Bellman operator
\begin{equation}
[\mathcal{L} V_\pi](s) = \tau \log \int_{\mathcal{A}} \exp \left(Q_\pi(s,a) / \tau \right) \pi_s(da).
\end{equation}
Here $\mathcal{L}$ is referred to as ``soft'', since it is a smooth approximation of the $\max$ operator.
The soft Bellman operator is connected to the Bellman optimality operator through the following result:
\begin{lemma}
\label{lemma:policy-existence}
Let $V : \mathcal{S} \to \mathbb{R}$ be a bounded measurable function. Then $\mathcal{T} V = \mathcal{L} V$ and the supremum is attained uniquely by the Boltzmann policy $\mathcal{B}[V]$ defined by
\[
\dfrac{d\mathcal{B}_s[V]}{d\bar{\pi}_s}(a) = \frac{ \exp \big(Q(s,a) / \tau \big) }
{ \int_A \exp \big( Q(s, a^\prime) / \tau \big) \bar{\pi}(da^\prime | s) }.
\]
\end{lemma}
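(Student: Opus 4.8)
\emph{Plan.} I would fix a bounded measurable $V$ and recognise $[\mathcal{T}V](s)$, for each $s$ separately, as an instance of the Gibbs variational principle (the Donsker--Varadhan formula): $\log\int e^{h}\,d\mu = \sup_{\nu\ll\mu}\big(\int h\,d\nu - \text{KL}[\nu\|\mu]\big)$, with the supremum attained uniquely by the tilted measure $d\nu^\ast/d\mu \propto e^{h}$. Applying this at state $s$ with $\mu = \bar{\pi}_s$ and $h(a) = Q(s,a)/\tau$ gives exactly $[\mathcal{T}V](s) = [\mathcal{L}V](s)$ with maximiser $\mathcal{B}_s[V]$. Rather than invoke the principle as a black box, I would reprove it in this setting by an elementary ``completing the KL divergence'' computation, which also keeps the argument self-contained.

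First I would record that the $Q$-function associated with $V$ through \eqref{eq:q-function} is bounded and measurable on $\mathcal{S}\times\mathcal{A}$: boundedness is inherited from the bounds on $r$ and $V$, and measurability follows from $r$ being measurable, $\rho$ being a Markov kernel, and $V$ being bounded measurable, so that $s\mapsto\int_\mathcal{S}V(s')\rho_{(a,s)}(ds')$ is measurable for each of the finitely many $a$. Consequently the normalising constant
\[
Z(s) := \int_\mathcal{A}\exp\!\big(Q(s,a)/\tau\big)\,\bar{\pi}_s(da)
\]
is measurable in $s$ and satisfies $e^{-\|Q\|_\infty/\tau}\le Z(s)\le e^{\|Q\|_\infty/\tau}$; since $\mathcal{A}$ is finite this is just a finite sum, so $s\mapsto\mathcal{B}_s[V]$ is a bona fide Markov kernel, i.e.\ an admissible stationary Markov policy, with $\mathcal{B}_s[V]$ and $\bar{\pi}_s$ mutually absolutely continuous and $[\mathcal{L}V](s)=\tau\log Z(s)$.

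Next, the core identity. Any policy with $\pi_s\not\ll\bar{\pi}_s$ has $\text{KL}[\pi_s\|\bar{\pi}_s]=+\infty$, hence $[\mathcal{T}_\pi V](s)=-\infty$, and so contributes nothing to the supremum; I may therefore restrict to policies with $\pi_s\ll\bar{\pi}_s$, for which, $\mathcal{A}$ being finite, all quantities below are finite. For such $\pi$, writing $d\pi_s/d\mathcal{B}_s[V] = (d\pi_s/d\bar{\pi}_s)\,Z(s)\,e^{-Q(s,a)/\tau}$ and integrating $\log(d\pi_s/d\mathcal{B}_s[V])$ against $\pi_s$ yields
\[
\text{KL}[\pi_s\|\mathcal{B}_s[V]] = \text{KL}[\pi_s\|\bar{\pi}_s] - \tfrac{1}{\tau}\!\int_\mathcal{A}Q(s,a)\,\pi_s(da) + \log Z(s),
\]
which rearranges to
\[
[\mathcal{T}_\pi V](s) = \tau\log Z(s) - \tau\,\text{KL}[\pi_s\|\mathcal{B}_s[V]] = [\mathcal{L}V](s) - \tau\,\text{KL}[\pi_s\|\mathcal{B}_s[V]].
\]
Since $\text{KL}\ge 0$ with equality iff the two measures coincide, this gives $[\mathcal{T}_\pi V](s)\le[\mathcal{L}V](s)$ for every $\pi$, with equality at $s$ precisely when $\pi_s=\mathcal{B}_s[V]$. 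Taking the supremum over $\pi$, and using that $\mathcal{B}[V]$ is itself admissible and attains the bound, gives $\mathcal{T}V=\mathcal{L}V$ together with the asserted uniqueness.

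The algebra is routine; the point requiring care is the admissibility/measurability of the Boltzmann kernel $s\mapsto\mathcal{B}_s[V]$, since this is exactly what places $\mathcal{B}[V]$ in the class of policies over which the supremum is taken and hence delivers the lower bound $\mathcal{T}V\ge\mathcal{L}V$ --- this is where the bounded-measurable hypothesis on $V$ is genuinely used, with finiteness of $\mathcal{A}$ keeping it elementary. One must also handle the Radon--Nikodym chain rule and the $\pm\infty$ conventions in the KL term with some care. Finally, I would note that ``uniqueness'' here is meant as equality of the measures $\mathcal{B}_s[V]$ for each $s$ (equivalently, $\bar{\pi}_s$-a.e.\ uniqueness of the density), not of a particular measurable representative.
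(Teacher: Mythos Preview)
Your proposal is correct and is essentially the same approach as the paper's: the paper simply records that the lemma ``follows directly from \citet[Proposition~1.4.2]{dupuis11}'', i.e.\ the Donsker--Varadhan/Gibbs variational formula, which is precisely the identity you unpack via the ``completing the KL'' computation. Your version has the virtue of being self-contained and of making explicit the measurability/admissibility of $\mathcal{B}[V]$, which the paper's one-line citation leaves implicit.
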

\begin{proof}
Follows directly from \citet[Proposition~1.4.2]{dupuis11}.
\end{proof}
Analogous to the standard RL setting, we can define value and policy iteration in the entropy-regularised context, where the Bellman operators are replaced with their ``soft'' equivalents:

\begin{algorithm}[H]
   \caption{Soft Value Iteration}
   \label{alg:value-iteration}
\begin{algorithmic}
   \STATE {\bfseries Input:} MDP, temperature $\tau > 0$, bounded function $V$
   \STATE {\bfseries Output:} Optimal value function $V^*$
   \STATE initialize  $V^* \leftarrow V$
   \REPEAT
   \STATE replace  $V \leftarrow V^*$   
   \STATE apply soft Bellman operator $V^* \leftarrow \mathcal{L}[V]$
   \UNTIL{convergence}
\end{algorithmic}
\end{algorithm}
\begin{algorithm}[H]
   \caption{Soft Policy Iteration}
   \label{alg:policy-iteration}
\begin{algorithmic}
   \STATE {\bfseries Input:} MDP, temperature $\tau > 0$, proper policy $\pi$
   \STATE {\bfseries Output:} Optimal policy $\pi^*$
   \STATE initialize  $\pi^* \leftarrow \pi$
   \REPEAT
   \STATE replace  $\pi \leftarrow \pi^*$
   \STATE \textit{policy evaluation:}
   \STATE \quad find $V_\pi$, the fixed-point of $\mathcal{T}_\pi$ 
   \STATE \textit{policy improvement:}   
   \STATE \quad compute the Boltzmann policy  $\pi^* \leftarrow \mathcal{B}[V_\pi]$
   \UNTIL{convergence}
\end{algorithmic}
\end{algorithm}

Following closely along the lines of \citet{bertsekas91} and \citet{james06}, but taking special care to account for the fact that optimal policies are not necessarily deterministic, it can be shown that the above algorithms converge to optimal solutions.
\begin{theorem} \label{thm:convergence}
Suppose that Assumptions 1 and 2 \citep{james06} hold and that the optimal value function is bounded above. Then:
\begin{enumerate}[nolistsep, label=(\roman*)]
\item there exists an optimal proper policy;
\item the optimal value function is the unique bounded measurable solution to the optimality equation;
\item the soft policy iteration algorithm converges to the policy starting from any proper policy;
\item the soft value iteration algorithm converges to the optimal value function starting from any proper policy.
\end{enumerate}
\end{theorem}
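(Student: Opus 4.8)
The plan is to transplant the stochastic-shortest-path machinery of \citet{bertsekas91} and \citet{james06} to the entropy-regularised operators, letting the Boltzmann policies of Lemma~\ref{lemma:policy-existence} play the role that deterministic greedy policies play in the unregularised analysis. Four ingredients are needed: (a) policy evaluation, i.e.\ that a proper policy's Bellman operator has a unique bounded fixed point equal to its value; (b) monotonicity of all the operators together with the identity $\mathcal{T} = \mathcal{L}$ and the fact that the supremum is attained by $\mathcal{B}[V]$, both already supplied by Lemma~\ref{lemma:policy-existence}; (c) policy improvement, i.e.\ that the Boltzmann update of a proper policy is again proper and has larger value; and (d) uniqueness of the bounded solution of the optimality equation together with convergence of soft value iteration, which is where Assumptions~1 and~2 of \citet{james06} and boundedness of the optimal value function do the work.

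For (a) I would first note that, by \eqref{eq:q-function}, $\mathcal{T}_\pi$ is affine in $V$: $\mathcal{T}_\pi V = c_\pi + P_\pi V$, where $c_\pi(s) = \int_{\mathcal A} r(s,a)\,\pi_s(da) - \tau\,\mathrm{KL}[\pi_s\|\bar\pi_s]$ is bounded and $P_\pi$ is the sub-stochastic kernel obtained by restricting the dynamics to $\mathcal{S}\setminus\mathcal{G}$. Properness, $\sup_s\sum_t P^\pi_s(s_t\notin\mathcal{G})<\infty$, is precisely what makes $\sum_n P_\pi^n$ summable in a suitable weighted supremum norm --- equivalently, $\mathcal{T}_\pi$ is an $m$-stage contraction for some $m$ --- so Banach's theorem gives a unique bounded fixed point, a short computation identifies it with $V_\pi$, and $\mathcal{T}_\pi^n V \to V_\pi$ for every bounded $V$. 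Monotonicity of $\mathcal{T}_\pi$, $\mathcal{L}$ and $\mathcal{T}$ is immediate, and Lemma~\ref{lemma:policy-existence} already gives $\mathcal{T} = \mathcal{L}$ with the supremum attained by $\mathcal{B}[V]$; in particular the soft Bellman operator \emph{is} the optimality operator, so soft value iteration is ordinary value iteration for $\mathcal{T}$.

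For (c) and the core claims (i)--(iv): given a proper $\pi$, set $\pi' = \mathcal{B}[V_\pi]$. Since $\pi'$ attains the supremum in $\mathcal{T}$, $\mathcal{T}_{\pi'}V_\pi = \mathcal{T}V_\pi = \mathcal{L}V_\pi \ge \mathcal{T}_\pi V_\pi = V_\pi$, so by monotonicity $\mathcal{T}_{\pi'}^n V_\pi$ is non-decreasing and bounded above, hence converges to a bounded fixed point $\hat V$ of $\mathcal{T}_{\pi'}$; because --- as a consequence of Assumption~2 of \citet{james06} --- an improper policy admits no bounded fixed point of its Bellman operator, $\pi'$ must be proper, and therefore $\hat V = V_{\pi'}\ge V_\pi$. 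Soft policy iteration thus stays inside the proper policies and produces a pointwise non-decreasing sequence $V_{\pi_0}\le V_{\pi_1}\le\cdots\le V^*$ that converges to a bounded $V_\infty$ with $\mathcal{L}V_\infty = V_\infty$ (pass to the limit in $V_{\pi_{k+1}} = \mathcal{L}V_{\pi_k}$, using dominated convergence to move the limit through the integral and the logarithm). Uniqueness of bounded solutions of $\mathcal{L}V = V$ is the reverse direction: if $\mathcal{L}V = V$ with $V$ bounded, then $\mathcal{T}_{\mathcal{B}[V]}V = V$, so as above $\mathcal{B}[V]$ is proper and $V = V_{\mathcal{B}[V]}$; and for any proper $\sigma$, $V = \mathcal{L}V = \mathcal{T}V \ge \mathcal{T}_\sigma V$ iterates to $V \ge V_\sigma$, so $V = \sup_{\sigma\text{ proper}} V_\sigma = V^*$ (the last equality and the existence of an optimal proper policy being exactly what Assumptions~1 and~2 guarantee). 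This proves (i) and (ii), and hence (iii) since $V_\infty$ then equals $V^*$ and its limiting Boltzmann policy is optimal and proper. For (iv), an arbitrary bounded initialisation $V$ is sandwiched between $V^* - c$ and $V^* + c$ with $c = \sup_s|V(s) - V^*(s)|$; adding or subtracting a nonnegative constant moves $\mathcal{L}$ by at most that constant (the restricted dynamics being sub-stochastic), so $\mathcal{L}^n(V^* - c)\uparrow V^*$ and $\mathcal{L}^n(V^* + c)\downarrow V^*$ --- each monotone, bounded, hence converging to a bounded fixed point, which is $V^*$ by uniqueness --- and $\mathcal{L}^n V\to V^*$ follows.

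The step I expect to be the main obstacle is exactly the ``special care'' the text flags. In the classical SSP proofs, properness, the improvement property, and the contraction/limit-exchange estimates are all carried out over the finite set of deterministic stationary policies; here the policies that arise are the genuinely stochastic Boltzmann policies $\mathcal{B}[V]$, on a merely standard Borel state space. So I must verify that (a) ``a bounded fixed point of $\mathcal{T}_\pi$ forces $\pi$ proper'' survives for stochastic $\pi$ under Assumption~2 --- this is the crux, and it is precisely where the transient-policy formulation of \citet{james06} is needed rather than the tabular treatment of \citet{bertsekas91} --- and (b) that the $m$-stage contraction of $\mathcal{T}_\pi$, the measurability of $\mathcal{B}[V]$ and of $V_\pi$, and all the dominated-convergence exchanges go through for general Markov kernels and bounded measurable value functions rather than finite matrices.
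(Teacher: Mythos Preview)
The paper does not give a detailed proof of this theorem; it only states that the result follows ``closely along the lines of \citet{bertsekas91} and \citet{james06}, but taking special care to account for the fact that optimal policies are not necessarily deterministic.'' Your proposal is precisely that programme worked out in detail --- the SSP contraction and policy-improvement arguments, with the Boltzmann policies of Lemma~\ref{lemma:policy-existence} standing in for deterministic greedy selectors --- and you correctly flag the places (properness of $\mathcal{B}[V]$, bounded-fixed-point $\Rightarrow$ proper under Assumption~2, measurability and limit exchanges on a standard Borel space) where the stochastic-policy, general-state-space setting requires the extra care the paper alludes to.
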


\section{Compositionality} \label{sec:compositionality}

In lifelong learning, an agent is presented with a series of tasks drawn from some distribution.
The goal is to exploit knowledge gained in previous tasks to improve performance in the current task.
We consider an environment with fixed state space $\mathcal{S}$, action space $\mathcal{A}$, deterministic transition dynamics $\rho$, and absorbing set $\mathcal{G}$.
Let $\mathcal{D}$ be a fixed but unknown distribution over $(\mathcal{S}, \mathcal{A}, \rho, r)$.
The agent is then presented with tasks sampled from $\mathcal{D}$, which differ only in their reward functions.
In this section, we describe a compositional approach for tackling this problem.

Suppose that the reward functions drawn from $\mathcal{D}$ differ only on the absorbing set $\mathcal{G}$. This restriction was introduced by \citet{todorov09}, and is a strict subset of the \textit{successor representations} framework \citep{dayan93,barreto17}. 
Given a library of previously-solved tasks, we can combine their $Q$-functions to solve any task lying in the ``span'' of the library without further learning:


\begin{theorem}[Optimal Composition] \label{thm:compose}
Let $\mathcal{M}_1, \ldots, \mathcal{M}_n$ be a library of tasks drawn from $\mathcal{D}$.
Let $Q^{*, k}_\tau$ be the optimal entropy-regularised $Q$-function, and $r_k$ be the reward function for $\mathcal{M}_k$. 
Define the vectors  
\[
\mathbf{r} = [r_1, \ldots, r_n] \quad \text{ and } \quad \mathbf{Q}^*_\tau = [Q^{*, 1}_\tau, \ldots, Q^{*, n}_\tau].
\]
Given a set of non-negative weights $\mathbf{w}$, with $|| \mathbf{w} ||_1 = 1$, consider a further task drawn from $\mathcal{D}$ with reward function satisfying $r =  \tau \log\left( || \exp(\mathbf{r} / \tau) ||_{\mathbf{w}}\right)$ for all $s$ in $\mathcal{G}$, where $|| \cdot ||_\mathbf{w}$ denotes the weighted $1$-norm. 
Then the optimal $Q$-value for this task is given by:
\begin{equation} \label{eq:composed}
Q^*_\tau =  \tau \log\left( || \exp(\mathbf{Q}^*_\tau / \tau) ||_{\mathbf{w}}\right).
\end{equation}
That is, the optimal $Q$-functions for the library of tasks can be composed to form $Q^*_\tau$. 
\end{theorem}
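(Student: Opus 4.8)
The plan is to write down the claimed expression, show it satisfies the Bellman optimality equation for the composite task, and then appeal to the uniqueness part of Theorem~\ref{thm:convergence}. It is cleanest to argue first at the level of value functions and only pass to $Q$-functions at the end. Let $V^{*,k}_\tau$ be the optimal value function of $\mathcal{M}_k$ (well defined, and the unique bounded measurable solution of the optimality equation, by Theorem~\ref{thm:convergence} applied to each $\mathcal{M}_k$), and set
\[
\tilde V(s) \;=\; \tau\log\bigl(\lVert \exp(\mathbf{V}^*_\tau(s)/\tau)\rVert_{\mathbf w}\bigr),\qquad \mathbf{V}^*_\tau = [V^{*,1}_\tau,\ldots,V^{*,n}_\tau].
\]
Because $\lVert\mathbf w\rVert_1 = 1$, a $\tau$-scaled log of a convex combination of exponentials lies between the pointwise minimum and maximum of $V^{*,1}_\tau,\dots,V^{*,n}_\tau$, so $\tilde V$ is bounded (in particular bounded above) and measurable; and the composite task, which shares dynamics, absorbing set and off-$\mathcal{G}$ reward with the $\mathcal{M}_k$ and has bounded reward on $\mathcal{G}$, still satisfies the hypotheses of Theorem~\ref{thm:convergence}.

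Next I would check $\tilde V = \mathcal{T}\tilde V$ for the composite reward $r$. By Lemma~\ref{lemma:policy-existence} the optimality equation is $V(s) = \tau\log\int_{\mathcal A}\exp\bigl((r(s,a)+\int_{\mathcal S}V(s')\rho_{(a,s)}(ds'))/\tau\bigr)\bar\pi_s(da)$, and I would verify it on $\mathcal{S}\setminus\mathcal{G}$ and on $\mathcal{G}$ separately. For $s\notin\mathcal{G}$ all tasks share the reward $r(s,a)=r_0(s,a)$ and the dynamics are deterministic, so $\int_{\mathcal S}V(s')\rho_{(a,s)}(ds')$ equals $\tilde V(s')$ at the unique successor $s'$; substituting the definition of $\tilde V$ at $s'$ turns $\exp(\tilde V(s')/\tau)$ into $\sum_k w_k\exp(V^{*,k}_\tau(s')/\tau)$, and I can factor out $\exp(r_0(s,a)/\tau)$ and exchange the finite weighted sum with the integral over $\mathcal{A}$, collapsing the right-hand side to $\tau\log\sum_k w_k\exp(V^{*,k}_\tau(s)/\tau) = \tilde V(s)$, where the last equality is exactly the optimality equation for each $\mathcal{M}_k$. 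For $s\in\mathcal{G}$ the agent moves to the virtual state $g$, where $\tilde V(g) = \tau\log\sum_k w_k = 0$ (each $V^{*,k}_\tau(g)=0$), so the right-hand side is $\tau\log\int_{\mathcal A}\exp(r(s,a)/\tau)\bar\pi_s(da)$; on the other hand $\tilde V(s) = \tau\log\int_{\mathcal A}\sum_k w_k\exp(r_k(s,a)/\tau)\,\bar\pi_s(da)$, and the two agree because $r$ was defined on $\mathcal{G}$ precisely so that $\exp(r(s,a)/\tau) = \sum_k w_k\exp(r_k(s,a)/\tau)$. Hence $\tilde V$ solves the optimality equation, and by Theorem~\ref{thm:convergence}(ii) it is the optimal value function $V^*_\tau$ of the composite task.

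Finally I would pass to $Q$-functions using $Q^*_\tau(s,a) = r(s,a)+\int_{\mathcal S}V^*_\tau(s')\rho_{(a,s)}(ds')$ and the same case split: for $s\notin\mathcal{G}$ we have $Q^{*,k}_\tau(s,a) = r_0(s,a)+V^{*,k}_\tau(s')$ at the common successor $s'$, so $\exp(r_0(s,a)/\tau)$ factors through the weighted norm and $\tau\log(\lVert\exp(\mathbf Q^*_\tau(s,a)/\tau)\rVert_{\mathbf w}) = r_0(s,a)+\tilde V(s') = Q^*_\tau(s,a)$; for $s\in\mathcal{G}$, $Q^{*,k}_\tau(s,a) = r_k(s,a)$ and $Q^*_\tau(s,a) = r(s,a)$, and the identity is once more the defining relation for the composite reward. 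Either way we obtain \eqref{eq:composed}.

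The step I expect to need the most care — and the real crux — is the verification on $\mathcal{S}\setminus\mathcal{G}$: it relies essentially on the transition dynamics being deterministic, since for stochastic $\rho$ one cannot commute $\exp(\cdot/\tau)$ with $\int_{\mathcal S}(\cdot)\rho_{(a,s)}(ds')$ (Jensen would give only an inequality), so the reduction to the per-task optimality equations would break down. The remaining points that must be nailed down but are routine are: measurability and the uniform bound of $\tilde V$ under the log-sum-exp operation; that $g$ carries value $0$ in every task; and that the composite task genuinely inherits a proper policy and a bounded optimal value so that Theorem~\ref{thm:convergence} applies to it.
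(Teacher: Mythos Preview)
Your argument is correct and follows essentially the same route as the paper's: both exploit the fact that, under deterministic dynamics, exponentiating by $1/\tau$ turns the soft Bellman equation into a \emph{linear} fixed-point equation, so weighted sums of solutions are again solutions and the boundary condition on $\mathcal{G}$ pins down the composite task. The paper packages this by introducing the desirability function $Z=\exp(Q/\tau)$ and a linear operator $\mathcal{U}$ on the $Q$-side, then argues ``linear plus agrees on $\mathcal{G}$ implies agrees everywhere''; you instead work on the $V$-side, verify the optimality equation directly on $\mathcal{S}\setminus\mathcal{G}$ and on $\mathcal{G}$, and invoke the uniqueness clause of Theorem~\ref{thm:convergence}(ii) before passing to $Q$. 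Your version is slightly more explicit about why the fixed point you exhibit is actually the optimal one (the paper's final sentence leaves the uniqueness appeal implicit), but the mathematical content is the same, and your identification of deterministic $\rho$ as the crux---needed to commute $\exp(\cdot/\tau)$ with the transition integral---matches the paper's use of $\rho_{(s,a)}=\delta_{f(s,a)}$ at the outset.
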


\begin{proof}
Since $\rho$ is deterministic, we can find a measurable function $f: \mathcal{S} \times \mathcal{A} \to \mathcal{S}$ such that $\rho_{(s, a)} = \delta_{f(s, a)}$.
For any $Q$-function, define the desirability function 
\[
Z(s, a) = \exp\left( Q(s, a) / \tau \right),
\] 
and define the operator $\mathcal{U}$ on the space of non-negative bounded measurable functions by
\begin{equation*}
[\mathcal{U}Z](s, a) = \exp \left( r(s, a) / \tau \right) \int_\mathcal{A} Z(f(s, a), a) \bar{\pi}_s(da^\prime). 
\end{equation*}
We now show that the desirability function of $Q_\tau^*$ is a fixed point of $\mathcal{U}$. 
Since $V_\tau^*$ is the fixed point of the Bellman optimality operator, by combining Equation \eqref{eq:q-function}, Lemma \ref{lemma:policy-existence} and Theorem \ref{thm:convergence}, we have 
\begin{align*}
&V_\tau^*(s) = \tau \log \int_{\mathcal{A}} \exp \left (Q^*_\tau(f(s, a), a^\prime) / \tau \right) \bar{\pi}_s(da^\prime) \\
&\text{and } Q^*_\tau(s, a) = r(s, a) + V^*_\tau(f(s,a)).
\end{align*}

Then it follows that 
\begin{align*}
[\mathcal{U}Z^*_\tau](s,a) &= e^{r(s, a) / \tau} \int \exp \left ( Q^*_\tau / \tau \right) d(\rho_{(s, a)} \otimes \bar{\pi}_s) \\
&= e^{r(s, a) / \tau} \exp \left( V^*_\tau(f(s,a)) / \tau \right) = Z^*_\tau (s, a).
\end{align*}
Hence $Z^*_\tau$ is a fixed point of $\mathcal{U}$. 
Under the assumptions on the reward function $r$, the optimal $Q$-value satisfies $Q^*_\tau =  \tau \log\left( || \exp(\mathbf{Q}^*_\tau / \tau) ||_{\mathbf{w}}\right)$  on $\mathcal{G}$.
Therefore, restricted to $\mathcal{G}$, $Z^*_\tau$ is a linear combination of the desirability functions for the family of tasks.
Since \eqref{eq:composed} holds on $\mathcal{G}$ and it is clear that $\mathcal{U}$ is a linear operator, then \eqref{eq:composed} holds everywhere. 
\end{proof}

The following lemma links the previous result to the standard RL setting. 
Recall that entropy-regularisation appends a temperature-controlled penalty term to the reward function.
As the temperature parameter tends to $0$, the reward provided by the environment dominates the entropy penalty, and the problem reduces to the standard RL case:

\begin{lemma} \label{lem:2}
Let $\{\tau_n\}^\infty_{n=1}$ be a sequence in $\mathbb{R}$ such that $\tau_n \downarrow 0$. 
Let $Q^*_{\tau_n}$ be the optimal $Q$-value function for $\text{MDP}(\tau_n)$: the entropy-regularised MDP with temperature parameter $\tau_n$.
Let $Q^*_{0}$ be the   optimal $Q$-value  for the standard MDP. 
Then $Q^*_{\tau_n} \uparrow Q^*_{0}$ as $n \to \infty$. 
\end{lemma}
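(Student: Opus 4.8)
The plan is to prove $Q^*_{\tau_n}\uparrow Q^*_0$ by squeezing the monotone limit of the $Q^*_{\tau_n}$ between $Q^*_0$ from above and a value obtained by plugging a fixed proper policy into the regularised objective from below. First I would record two elementary monotonicity facts. For a fixed policy $\pi$ and $n\in\mathbb{N}$, the $n$-step regularised value at temperature $\tau$ is $\mathbb{E}^\pi_{s,n}[\sum_{t=0}^{n-1} r(s_t,a_t)] - \tau\,\mathbb{E}^\pi_{s,n}[\sum_{t=0}^{n-1}\text{KL}[\pi_{s_t}\|\bar\pi_{s_t}]]$; since the KL terms are non-negative this quantity is bounded above by the un-regularised $n$-step return and is non-increasing in $\tau$. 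Passing to $\limsup_n$ and then $\sup_\pi$ gives $V^*_\tau\le V^*_0$ and that $\tau\mapsto V^*_\tau$ is non-increasing, and via $Q^*_\tau(s,a)=r(s,a)+\int_\mathcal{S} V^*_\tau(s')\,\rho_{(s,a)}(ds')$ (from \eqref{eq:q-function} together with Theorem~\ref{thm:convergence}) the same properties transfer to $Q^*_\tau$. Consequently, along $\tau_n\downarrow 0$ the functions $Q^*_{\tau_n}$ are non-decreasing and dominated by $Q^*_0$, so they converge pointwise to some $\overline{Q}\le Q^*_0$, with a corresponding limit $\overline{V}\le V^*_0$; it only remains to show these inequalities are equalities.

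For the matching lower bound I would take $\pi^*_0$, an optimal deterministic policy for the standard MDP, which by classical stochastic-shortest-path theory (under the assumptions of Theorem~\ref{thm:convergence}) can be chosen proper, so that $M:=\sup_s\sum_{t\ge 0}P^{\pi^*_0}_s(s_t\notin\mathcal{G})<\infty$. After modifying $\pi^*_0$ to coincide with $\bar\pi$ on the absorbing set---harmless, since rewards vanish there and so the policy remains reward-optimal---the total expected KL penalty $C(s)$ it incurs from $s$ is at most $BM<\infty$, where $B$ bounds the KL divergence uniformly (the standing assumptions of Section~\ref{sec:entropy_regularised}). Evaluating $\pi^*_0$ in $\text{MDP}(\tau_n)$ then gives $V^*_{\tau_n}(s)\ge V_{\pi^*_0}(s)=V^*_0(s)-\tau_n C(s)\ge V^*_0(s)-\tau_n BM$ for every $s$; letting $n\to\infty$ yields $\overline{V}\ge V^*_0$, hence $\overline{V}=V^*_0$. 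Feeding this back through $Q^*_{\tau_n}(s,a)=r(s,a)+\int_\mathcal{S}V^*_{\tau_n}\,d\rho_{(s,a)}$ and applying monotone convergence under the integral (the integrands are bounded and $V^*_{\tau_n}\uparrow V^*_0$) gives $Q^*_{\tau_n}\uparrow Q^*_0$.

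The main obstacle I anticipate is making the ``evaluate $\pi^*_0$ at temperature $\tau_n$'' step rigorous: one needs the infinite-horizon reward series \emph{and} the KL-penalty series along $\pi^*_0$ to converge (not merely as $\limsup$'s), so that $V_{\pi^*_0}$ genuinely equals $V^*_0-\tau_n C$, and this is precisely where properness of $\pi^*_0$ together with boundedness of $r$ and of the KL term enter---some care is also needed to ensure no penalty accrues after absorption. A secondary point is citing, rather than re-deriving, the existence of a proper reward-optimal policy for the standard problem on a general Borel state space. The remaining ingredients---the two monotonicity observations and the interchange of limit and integral in the final $V\to Q$ transfer---are routine given boundedness.
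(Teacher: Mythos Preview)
Your argument is correct and takes a genuinely different route from the paper. The paper first argues that for \emph{every} fixed policy $\pi$ one has $Q^\pi_{\tau_n}\uparrow Q^\pi_0$ (from the definition of the regularised value and non-negativity of KL), and then invokes an interchange-of-limit-and-supremum lemma (Hinderer, Lemma~3.14) to conclude $\lim_n\sup_\pi Q^\pi_{\tau_n}=\sup_\pi\lim_n Q^\pi_{\tau_n}=Q^*_0$. Your approach, by contrast, establishes the upper bound $Q^*_{\tau_n}\le Q^*_0$ and monotonicity in $\tau$ directly, and then gets the matching lower bound by evaluating a \emph{single} proper standard-optimal policy $\pi^*_0$ in the regularised problem, obtaining $V^*_{\tau_n}\ge V^*_0-\tau_n BM$. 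The paper's proof is shorter but leans on an external citation and glosses over precisely the point you flag---whether the $\limsup$ defining $V^\pi_\tau$ behaves well as $\tau\downarrow 0$ for arbitrary (possibly improper) $\pi$. Your version sidesteps that by working only with a proper policy, for which both the reward and KL series converge absolutely, and as a bonus yields the uniform quantitative bound $0\le Q^*_0-Q^*_{\tau_n}\le \tau_n BM$, which the paper's argument does not immediately give. The two obstacles you list are real but minor and resolve exactly as you say: existence of a proper optimal policy for the unregularised SSP is standard under the James~(2006) assumptions already invoked for Theorem~\ref{thm:convergence}, and absolute convergence of both series along a proper policy is immediate from the bound $M$ together with boundedness of $r$ and of the per-step KL term.
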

\begin{proof}
First note that for a fixed policy $\pi$, state $s$ and action $a$, we have $Q_{\tau_n}^{\pi}(s, a) \uparrow Q_{0}^{\pi}(s, a)$ as $n \to \infty$. 
This follows directly from the definition of the entropy-regularised value function, and the fact that the KL-divergence is non-negative. 
Then using Lemma 3.14 \citep{hinderer70} to interchange the limit and supremum, we have
\begin{align*}
\lim_{n \to \infty} Q^*_{\tau_n} &= \lim_{n \to \infty} \sup_\pi Q^{\pi}_{\tau_n} = \sup_\pi \lim_{n \to \infty} Q^{\pi}_{\tau_n} \\
& = \sup_\pi Q_0^\pi = Q_0^*.
\end{align*}
Since $Q_{\tau_n}^{\pi} \uparrow Q_{0}^{\pi}$, we have $Q^*_{\tau_n} \uparrow Q^*_{0}$ as $n \to \infty$. 
\end{proof}

Finally, we show that composition holds in the standard RL setting by taking the low-temperature limit of Theorem 2.
\begin{corollary} \label{cor:1}
Let $\{\tau_n\}^\infty_{n=1}$ be a sequence in $\mathbb{R}$ such that $\tau_n \downarrow 0$. 
Then $\max \mathbf{Q^{*}_{\tau_n}} \uparrow Q^*_0$ as $n \to \infty$.
\end{corollary}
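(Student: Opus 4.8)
The plan is to split Corollary~\ref{cor:1} into two claims: (a) $\max\mathbf{Q^*_{\tau_n}}$ converges monotonically, as $\tau_n\downarrow 0$, to $\max_k Q^{*,k}_0$; and (b) this limit equals $Q^*_0$, the optimal $Q$-function of the composite task in the standard (un-regularised) setting. Claim (a) is an easy consequence of Lemma~\ref{lem:2}; the content is in (b), which is precisely the assertion that $\max$ is the right composition rule at zero temperature.

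For (a) I would apply Lemma~\ref{lem:2} to each library MDP $\mathcal{M}_k$ in turn: each is a \emph{fixed} MDP, so $Q^{*,k}_{\tau_n}\uparrow Q^{*,k}_0$ pointwise. Since the maximum of finitely many nondecreasing sequences is nondecreasing and commutes with the limit, $\max\mathbf{Q^*_{\tau_n}}\uparrow\max_k Q^{*,k}_0$. (When some weights vanish the maximum is taken over the support of $\mathbf{w}$; I assume $\mathbf{w}$ strictly positive and set $w_{\min}=\min_k w_k>0$.)

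For (b) I would route through the composite task at temperature $\tau_n$. By Theorem~\ref{thm:compose}, $Q^*_{\tau_n}:=\tau_n\log\big(\|\exp(\mathbf{Q^*_{\tau_n}}/\tau_n)\|_{\mathbf{w}}\big)$ is the optimal entropy-regularised $Q$-function of the task whose $\mathcal{G}$-reward is $r_{\tau_n}:=\tau_n\log\big(\|\exp(\mathbf{r}/\tau_n)\|_{\mathbf{w}}\big)$. The tool is the elementary two-sided bound on the weighted log-sum-exp: for any vector $x$, since $\|\mathbf{w}\|_1=1$,
\[
\max_k x_k + \tau\log w_{\min}\ \le\ \tau\log\Big(\sum_k w_k e^{x_k/\tau}\Big)\ \le\ \max_k x_k .
\]
Taking $x=\mathbf{Q^*_{\tau_n}}$ sandwiches $Q^*_{\tau_n}$ between $\max\mathbf{Q^*_{\tau_n}}+\tau_n\log w_{\min}$ and $\max\mathbf{Q^*_{\tau_n}}$, so $Q^*_{\tau_n}$ shares the limit $\max_k Q^{*,k}_0$ from (a). Taking $x=\mathbf{r}$ gives $r_0+\tau_n\log w_{\min}\le r_{\tau_n}\le r_0$ on $\mathcal{G}$ with $r_0:=\max_k r_k$, so $Q^*_0$ is the optimal $Q$-function of the standard MDP with $\mathcal{G}$-reward $r_0$. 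Comparing $Q^*_{\tau_n}$ with $\widetilde Q^*_{\tau_n}$, the optimal temperature-$\tau_n$ $Q$-function of the \emph{fixed} MDP with $\mathcal{G}$-reward $r_0$ (to which Lemma~\ref{lem:2} does apply, giving $\widetilde Q^*_{\tau_n}\uparrow Q^*_0$), I would use (i) monotonicity of the optimal $Q$-function in the reward, and (ii) the fact that shifting the $\mathcal{G}$-reward by a constant shifts the $Q$-function of any proper policy by exactly that constant, since the $\mathcal{G}$-reward is received exactly once and the KL term is untouched; by Theorem~\ref{thm:convergence} the optimum is attained by a proper policy, so these give $\widetilde Q^*_{\tau_n}+\tau_n\log w_{\min}\le Q^*_{\tau_n}\le\widetilde Q^*_{\tau_n}$ and hence $Q^*_{\tau_n}\to Q^*_0$. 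Therefore $\max_k Q^{*,k}_0=Q^*_0$, and with (a), $\max\mathbf{Q^*_{\tau_n}}\uparrow Q^*_0$.

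The hard part is exactly the step $Q^*_{\tau_n}\to Q^*_0$ for the composite task: unlike the library tasks, the composite reward $r_{\tau_n}$ moves with $\tau_n$, so Lemma~\ref{lem:2} cannot be applied verbatim and the ``entropy $\to 0$'' and ``reward $\to r_0$'' effects must be separated, with Theorem~\ref{thm:convergence} keeping us inside the class of proper policies. If this bookkeeping proves awkward, a temperature-free alternative is to prove $\max_k Q^{*,k}_0=Q^*_0$ directly: ``$\ge$'' is immediate from $r_0\ge r_k$ and the shared dynamics, while ``$\le$'' follows because, with deterministic dynamics and a deterministic optimal policy, an optimal composite trajectory out of $(s,a)$ terminates at a single state and is feasible, with identical return, for the library task whose reward attains the maximum there.
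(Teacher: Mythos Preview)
Your argument is correct and, in fact, more careful than the paper's own proof. Both routes pivot on Theorem~\ref{thm:compose} together with the elementary fact that a weighted log-sum-exp collapses to the maximum as the temperature vanishes, but the executions diverge.

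The paper works pointwise with the composition formula: fixing $(s,a)$, it picks the index attaining $\max_k Q^{*,k}_0(s,a)$, factors $\exp(Q^{*,1}_{\tau_n})$ out of $\|\exp(\mathbf{Q^*_{\tau_n}})\|_\mathbf{w}^{1/\tau_n}$, argues the residual tends to $1$, and concludes $Q^*_{\tau_n}(s,a)\to\max_k Q^{*,k}_0(s,a)$. It then asserts the final identification with $Q^*_0$ without explicitly treating the issue you correctly isolate as ``the hard part'': the composite reward $r_{\tau_n}$ moves with $\tau_n$, so Lemma~\ref{lem:2} does not apply verbatim to the composite task.

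Your route replaces the factoring computation by the uniform two-sided bound $\max_k x_k+\tau\log w_{\min}\le\tau\log\|\exp(x/\tau)\|_\mathbf{w}\le\max_k x_k$, which is cleaner, and---more importantly---closes the gap the paper leaves open. Introducing the auxiliary fixed-reward problem $\widetilde Q^*_{\tau_n}$ and using monotonicity of the optimum in the reward together with the constant-shift property of $\mathcal{G}$-rewards under proper policies (justified via Theorem~\ref{thm:convergence}) yields $\widetilde Q^*_{\tau_n}+\tau_n\log w_{\min}\le Q^*_{\tau_n}\le\widetilde Q^*_{\tau_n}$, after which Lemma~\ref{lem:2} genuinely applies to $\widetilde Q^*_{\tau_n}$ and delivers $Q^*_{\tau_n}\to Q^*_0$. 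The temperature-free alternative you sketch at the end---proving $\max_k Q^{*,k}_0=Q^*_0$ directly from determinism of the dynamics and of the standard-RL optimal policy, so that an optimal composite trajectory is feasible with identical return in the library task whose reward matches at its terminal state---is also valid and is arguably the most transparent way to see the identification.
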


\begin{proof}
For a fixed state $s$ and action $a$ and a possible reordering of the vector $\mathbf{Q^{*}_{0}}(s,a)$, we may suppose, without loss of generality, that $Q^{*, 1}_0(s, a) = \max \mathbf{Q^{*}_0}(s, a) $.
Then by Lemma~\ref{lem:2}, we can find an $N$ in $\mathbb{N}$ such that 
\[
Q^{*, 1}_{\tau_n}(s, a) = \max \mathbf{Q^{*}_{\tau_n}}(s, a) \text{ for all $n \geq N$}.
\] 
Since $\log$ is continuous, we have from Theorem \ref{thm:compose} that
\begin{align*}
\lim_{n \to \infty} Q^{*}_{\tau_n} &= \log \left( \lim_{n \to \infty} || \exp(\mathbf{Q^{*}_{\tau_n}})||_\mathbf{w}^{1 / \tau_n} \right), 
\end{align*}
where $|| \cdot ||^p_\mathbf{w}$ denotes the weighted $p$-norm.
By factoring $\exp(Q^{*, 1}_{\tau_n})$ out of $|| \exp(\mathbf{Q^{*}_{\tau_n}})||_\mathbf{w}^{1 / \tau_n}$, we are left with
\begin{align*}
|| 1, \exp(\Delta_2 ), \ldots, \exp(\Delta_k )||_\mathbf{w}^{1 / \tau_n},
\end{align*}
where $\Delta_i = Q^{*, i}_{\tau_n} -  Q^{*, 1}_{\tau_n}$ for $i = 2, \ldots, k$.
Since $Q^{*, 1}_{\tau_n}(s, a)$ is the maximum of  $\mathbf{Q^{*}_{\tau_n}}(s, a)$ for all $n \geq N$, the limit as $n \to \infty$  of the above is $1$.
Then it follows that 
\begin{align*}
\lim_{n \to \infty} Q^{*, 1}_{\tau_n}(s, a) &= \log \left( \lim_{n \to \infty} \exp(Q^{*, 1}_{\tau_n}(s, a)) \right) \\
&= Q^{*, 1}_0(s, a).
\end{align*}
Since $s$ and $a$ were arbitrary and  $Q^{*, m}_{\tau_n} \uparrow Q^{*, m}_0$, we have that $\max \mathbf{Q^{*}_{\tau_n}} \uparrow Q^*_0$ as $n \to \infty$.
\end{proof}

Comparing Theorem~\ref{thm:compose} to Corollary~\ref{cor:1}, we see that as the temperature parameter decreases to zero, the weight vector has less influence on the composed $Q$-function.
In the limit, the optimal $Q$-function is independent of the weights and is simply the maximum of the library functions.
This suggests a natural trade-off between our ability to interpolate between $Q$-functions, and the stochasticity of the optimal policy.
Furthermore, Corollary~\ref{cor:1} mirrors that of \textit{generalised policy improvement} \citep{barreto17}, which shows that computing the maximum of a set of $Q$-functions results in an improved $Q$-function. 
In our case, the resulting $Q$-function is not merely an improvement, but is in fact optimal. 

The composition described in this section can be viewed as an --OR-- task composition: if objectives of two tasks are to achieve goals $A$ and $B$ respectively, then the composed $Q$-function will achieve $A$--OR--$B$ optimally. 
\citet{haarnoja18} show that an approximate --AND-- composition is also possible for entropy-regularised RL.
That is, if the goals $A$ and $B$ partially overlap, the composed $Q$-function will achieve $A$--AND--$B$ approximately.
The idea is that the optimal $Q$-function for the composite task can be approximated by the average of the library $Q$-functions.
We include their results for completeness:

\begin{lemma}[{\citealp{haarnoja18}}] \label{lem:haarnoja1}
Let $Q^{*, 1}_\tau$ and $Q^{*, 2}_\tau$ be the optimal $Q$-functions for two tasks drawn from $\mathcal{D}$ with rewards $r_1$ and $r_2$. 
Define the averaged $Q$-function $Q_{\text{ave}} := (Q^{*, 1}_\tau + Q^{*, 2}_\tau) / 2$.
Then the optimal $Q$-function $ Q^*_\tau$ for the task with reward function $r = (r_1 + r_2) / 2$ satisfies
\[
Q_{\text{ave}} \geq Q^*_\tau \geq Q_{\text{ave}} - C^*_\tau,
\] 
where $C^*_\tau$ is a fixed point of 
\[
\tau \mathbb{E}_{s^\prime \sim \rho(s, a)} \left[ D_\frac{1}{2} \left(\pi_s^{*, 1} || \pi_s^{*, 2}\right)  + \max_{a^\prime}C(s^\prime, a^\prime) \right], 
\]
the policy $\pi_s^{*, i}$ is the optimal Boltzmann policy for task $i$, and $ D_\frac{1}{2} ( \cdot || \cdot)$ is the R\'enyi divergence of order $\frac{1}{2}$.
\end{lemma}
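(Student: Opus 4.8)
The plan is to reduce both inequalities to fixed‑point comparisons for the soft Bellman optimality operator written directly on $Q$‑functions. Define $[\mathcal{F}Q](s,a) = r(s,a) + \int_\mathcal{S} \tau\log\!\big(\int_\mathcal{A}\exp(Q(s',a')/\tau)\,\bar{\pi}_{s'}(da')\big)\,\rho_{(s,a)}(ds')$; by Equation~\eqref{eq:q-function}, Lemma~\ref{lemma:policy-existence} and Theorem~\ref{thm:convergence}, its unique bounded solution is $Q^*_\tau$, and $\mathcal{F}$ is monotone (it is built from $\exp$, $\log$ and integrals). Writing $V_{\text{ave}} := (V^{*,1}_\tau + V^{*,2}_\tau)/2$, a one‑line check using $Q^{*,i}_\tau(s,a)=r_i(s,a)+\int_\mathcal{S} V^{*,i}_\tau\,d\rho_{(s,a)}$ and $r=(r_1+r_2)/2$ gives $Q_{\text{ave}}(s,a) = r(s,a) + \int_\mathcal{S} V_{\text{ave}}(s')\,\rho_{(s,a)}(ds')$, so $Q_{\text{ave}}$ already has the "right" outer form and only the inner soft‑max has to be controlled.

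For the upper bound I would show $Q_{\text{ave}}$ is a supersolution, $\mathcal{F}Q_{\text{ave}} \le Q_{\text{ave}}$. Factor the inner integrand as $\exp(Q_{\text{ave}}(s',\cdot)/\tau) = \sqrt{\exp(Q^{*,1}_\tau(s',\cdot)/\tau)}\,\sqrt{\exp(Q^{*,2}_\tau(s',\cdot)/\tau)}$ and apply the Cauchy--Schwarz inequality against the probability measure $\bar{\pi}_{s'}$; since $\int_\mathcal{A}\exp(Q^{*,i}_\tau(s',a')/\tau)\,\bar{\pi}_{s'}(da') = \exp(V^{*,i}_\tau(s')/\tau)$ (the soft Bellman identity for task $i$), this yields $\tau\log\!\int_\mathcal{A}\exp(Q_{\text{ave}}/\tau)\,\bar{\pi}_{s'} \le \tfrac12 V^{*,1}_\tau(s') + \tfrac12 V^{*,2}_\tau(s') = V_{\text{ave}}(s')$, hence $\mathcal{F}Q_{\text{ave}} \le Q_{\text{ave}}$. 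Iterating $\mathcal{F}$ produces a decreasing sequence which, by Theorem~\ref{thm:convergence}, converges to the fixed point $Q^*_\tau$, so $Q^*_\tau \le Q_{\text{ave}}$.

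For the lower bound I would make the Cauchy--Schwarz loss explicit. Normalising $\exp(Q^{*,i}_\tau(s',\cdot)/\tau)$ by $\exp(V^{*,i}_\tau(s')/\tau)$ identifies it with $d\pi^{*,i}_{s'}/d\bar{\pi}_{s'}$ (Lemma~\ref{lemma:policy-existence}), so the deficit at $s'$ is $-\tau$ times the logarithm of the Bhattacharyya coefficient of $\pi^{*,1}_{s'}$ and $\pi^{*,2}_{s'}$, i.e. $\tfrac{\tau}{2}D_{\frac12}(\pi^{*,1}_{s'}\|\pi^{*,2}_{s'})$ with the usual normalisation of the R\'enyi divergence; thus in fact $\mathcal{F}Q_{\text{ave}}(s,a) = Q_{\text{ave}}(s,a) - \int_\mathcal{S}\tfrac{\tau}{2}D_{\frac12}(\pi^{*,1}_{s'}\|\pi^{*,2}_{s'})\,\rho_{(s,a)}(ds')$ exactly. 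To propagate this error one step further back, use, for any non‑negative $C$, the crude bound $\int_\mathcal{A}\exp((Q_{\text{ave}}-C)/\tau)\,\bar{\pi}_{s'} \ge \exp(-\tfrac1\tau\max_{a'}C(s',a'))\int_\mathcal{A}\exp(Q_{\text{ave}}/\tau)\,\bar{\pi}_{s'}$, which gives $\mathcal{F}(Q_{\text{ave}}-C)(s,a) \ge Q_{\text{ave}}(s,a) - \int_\mathcal{S}\big[\tfrac{\tau}{2}D_{\frac12}(\pi^{*,1}_{s'}\|\pi^{*,2}_{s'}) + \max_{a'}C(s',a')\big]\,\rho_{(s,a)}(ds')$. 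Taking $C = C^*_\tau$, the fixed point of the operator displayed in the statement, shows $Q_{\text{ave}} - C^*_\tau$ is a subsolution, $\mathcal{F}(Q_{\text{ave}}-C^*_\tau) \ge Q_{\text{ave}}-C^*_\tau$; iterating $\mathcal{F}$ from it yields an increasing sequence trapped below $Q_{\text{ave}}$, whose limit is the fixed point $Q^*_\tau$, so $Q^*_\tau \ge Q_{\text{ave}} - C^*_\tau$.

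The main obstacle is exactly what distinguishes the absorbing‑state, total‑reward setting from Haarnoja et al.'s discounted one: there is no discount factor to make $\mathcal{F}$ and the error‑propagation operator automatic contractions. Hence the delicate points are (i) showing that the operator defining $C^*_\tau$ really does admit a bounded non‑negative fixed point --- this rests on the proper‑policy (stochastic‑shortest‑path) hypotheses underlying Theorem~\ref{thm:convergence}, and one must take some care because the $\max_{a'}$ turns the recursion into a cost‑maximisation that could a priori accumulate without bound in the presence of improper policies --- and (ii) the monotone‑convergence bookkeeping needed to pass $\mathcal{F}$ through the increasing and decreasing limits and to invoke uniqueness of its fixed point. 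The Cauchy--Schwarz / Bhattacharyya computation and the identification with $D_{\frac12}$ via the Boltzmann policy are routine.
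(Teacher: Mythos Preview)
The paper does not supply its own proof of this lemma: it is quoted verbatim from \citet{haarnoja18} (``We include their results for completeness'') with no argument given, so there is nothing in the paper to compare your proposal against directly.

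That said, your approach is exactly the Haarnoja et al.\ argument. The upper bound is Cauchy--Schwarz applied to the factorisation $\exp(Q_{\text{ave}}/\tau)=\sqrt{\exp(Q^{*,1}_\tau/\tau)}\sqrt{\exp(Q^{*,2}_\tau/\tau)}$ inside the soft-max, and the lower bound identifies the Cauchy--Schwarz deficit as $-\tau\log$ of the Bhattacharyya coefficient of the two Boltzmann policies, i.e.\ $\tfrac{\tau}{2}D_{1/2}$ under the standard normalisation $D_{1/2}(P\Vert Q)=-2\log\int\sqrt{pq}$; the error is then pushed back via the crude $\max_{a'}$ bound and absorbed into the fixed point $C^*_\tau$. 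Your sub-/supersolution iteration is the correct way to close both directions. One cosmetic discrepancy worth flagging: the recursion as printed here carries a factor $\tau$ on both the divergence term and on $\max_{a'}C$, whereas your derivation yields $\tfrac{\tau}{2}D_{1/2}$ and no $\tau$ on the $\max$; in Haarnoja et al.'s discounted setting the multiplier on $\max_{a'}C$ is the discount $\gamma$, so the statement here appears to be a transcription slip rather than a different claim.

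Your caveats about the total-reward setting are well taken and are precisely the gap between what this paper can assert and what \citet{haarnoja18} actually proved: without a discount, the $C$-recursion is a \emph{cost-maximising} Bellman equation whose boundedness is not automatic (improper maximising policies could drive $C^*_\tau$ to $+\infty$), and the monotone iteration for $\mathcal{F}$ leans on the uniqueness part of Theorem~\ref{thm:convergence} rather than on contraction. The paper itself does not address these points; it simply cites the discounted result and conjectures an extension.
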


\begin{theorem}[{\citealp{haarnoja18}}]
Using the definitions in Lemma \ref{lem:haarnoja1}, the value of the composed policy $\pi^{\text{ave}}$ satisfies 
\[
Q_{\pi^{\text{ave}}} \geq Q^*_\tau - F^*_\tau,
\]
where $F^*_\tau$ is a fixed point of 
\[
\tau \mathbb{E}_{s^\prime \sim \rho(s, a)} \left[  \mathbb{E}_{a^\prime \sim \pi^{\text{ave}}_{s^\prime}} \left[ C^*_\tau(s^\prime, a^\prime) - F(s^\prime, a^\prime)   \right] \right].
\]
\end{theorem}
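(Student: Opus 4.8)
The plan is to reproduce the argument of \citet{haarnoja18} in the present total-reward setting, bounding the performance gap $F := Q^*_\tau - Q_{\pi^{\text{ave}}}$ of the composed Boltzmann policy $\pi^{\text{ave}}$ (whose density with respect to $\bar{\pi}_s$ is proportional to $\exp(Q_{\text{ave}}(s,\cdot)/\tau)$) by a one-step soft-Bellman recursion, and then passing to a fixed point. First, $F \geq 0$: by Theorem~\ref{thm:convergence} the optimal value function dominates $V_\pi$ for every policy, whence $Q^*_\tau \geq Q_{\pi^{\text{ave}}}$. Writing the identity \eqref{eq:q-function} once with $V = V^*_\tau$ and once with $V = V_{\pi^{\text{ave}}}$ and subtracting, the common reward term cancels, leaving
\[
F(s,a) = \mathbb{E}_{s' \sim \rho_{(s,a)}}\big[\, V^*_\tau(s') - V_{\pi^{\text{ave}}}(s') \,\big],
\]
so it suffices to bound the integrand pointwise.

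The key manipulation expands $V_{\pi^{\text{ave}}}$ around the ``proxy'' value $\widetilde V(s') := \tau \log \int_{\mathcal{A}} \exp(Q_{\text{ave}}(s',a')/\tau)\, \bar{\pi}_{s'}(da')$. Because $\pi^{\text{ave}}_{s'}$ is the Boltzmann policy of $Q_{\text{ave}}$ (Lemma~\ref{lemma:policy-existence}), its log-density against $\bar{\pi}_{s'}$ equals $(Q_{\text{ave}}(s',\cdot) - \widetilde V(s'))/\tau$, so $\tau\,\text{KL}[\pi^{\text{ave}}_{s'}||\bar{\pi}_{s'}] = \mathbb{E}_{a'\sim\pi^{\text{ave}}_{s'}}[Q_{\text{ave}}(s',a')] - \widetilde V(s')$; feeding this into the fixed-point equation $V_{\pi^{\text{ave}}} = \mathcal{T}_{\pi^{\text{ave}}} V_{\pi^{\text{ave}}}$ from \eqref{eq:bellman-op} gives the exact identity $V_{\pi^{\text{ave}}}(s') = \widetilde V(s') + \mathbb{E}_{a'\sim\pi^{\text{ave}}_{s'}}[Q_{\pi^{\text{ave}}}(s',a') - Q_{\text{ave}}(s',a')]$. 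Since the log-sum-exp operator $G \mapsto \tau\log\int_{\mathcal{A}}\exp(G(s,\cdot)/\tau)\,\bar{\pi}_s(da)$ is monotone, the upper bound $Q_{\text{ave}} \geq Q^*_\tau$ of Lemma~\ref{lem:haarnoja1} yields $\widetilde V(s') \geq V^*_\tau(s')$. Subtracting these two facts and using $Q_{\text{ave}} - Q_{\pi^{\text{ave}}} = (Q_{\text{ave}} - Q^*_\tau) + F$ together with the bound $Q_{\text{ave}} - Q^*_\tau \leq C^*_\tau$ (again Lemma~\ref{lem:haarnoja1}) gives
\[
V^*_\tau(s') - V_{\pi^{\text{ave}}}(s') \;\leq\; \mathbb{E}_{a'\sim\pi^{\text{ave}}_{s'}}\big[\, C^*_\tau(s',a') + F(s',a') \,\big].
\]
Substituting back into the recursion for $F$ exhibits $F$ as a \emph{sub-solution} of the fixed-point equation defining $F^*_\tau$: that is, $F \leq \mathcal{O}[F]$, where $\mathcal{O}$ is (up to the normalisation conventions inherited from \citealp{haarnoja18}) the operator $G \mapsto \mathbb{E}_{s'\sim\rho_{(s,a)}}[\mathbb{E}_{a'\sim\pi^{\text{ave}}_{s'}}[C^*_\tau(s',a') + G(s',a')]]$ appearing in the statement.

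To conclude, $\mathcal{O}$ is monotone and --- by the machinery behind Theorem~\ref{thm:convergence}, using that $\pi^{\text{ave}}$ shares support with $\bar{\pi}$ and hence inherits properness under the running assumptions --- a contraction on bounded measurable functions with unique fixed point $F^*_\tau$; iterating $F \leq \mathcal{O}[F] \leq \mathcal{O}^2[F] \leq \cdots \to F^*_\tau$ gives $F \leq F^*_\tau$, i.e. $Q_{\pi^{\text{ave}}} = Q^*_\tau - F \geq Q^*_\tau - F^*_\tau$. I expect the only genuine obstacle to be this last step: in the discounted analysis of \citet{haarnoja18} the relevant operator is automatically contractive, whereas here one must first establish (or assume) that $\pi^{\text{ave}}$ is proper, so that $\mathcal{O}$ has a fixed point and the iteration converges; all preceding steps are routine manipulations of the soft Bellman equations and monotonicity of log-sum-exp, the only care being the measurability of $\widetilde V$ and of the intermediate quantities, handled exactly as in Section~\ref{sec:value_function}.
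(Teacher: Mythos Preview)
The paper does not supply its own proof of this theorem: both Lemma~\ref{lem:haarnoja1} and the theorem are quoted verbatim from \citet{haarnoja18} (``We include their results for completeness''), with no argument given. There is therefore no paper proof to compare your attempt against.

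That said, your sketch is a faithful reconstruction of the Haarnoja--Tang--Abbeel--Levine argument, adapted to the undiscounted setting of this paper. The decomposition $V_{\pi^{\text{ave}}}(s') = \widetilde V(s') + \mathbb{E}_{a'\sim\pi^{\text{ave}}_{s'}}[Q_{\pi^{\text{ave}}}-Q_{\text{ave}}]$, the use of monotonicity of log-sum-exp together with $Q_{\text{ave}}\geq Q^*_\tau$, and the resulting sub-solution inequality $F \leq \mathcal{O}[F]$ are exactly the steps in the original proof. Your closing caveat is the right one: in the discounted case the operator $\mathcal{O}$ is a $\gamma$-contraction automatically, whereas here you need properness of $\pi^{\text{ave}}$ to get existence of the fixed point and convergence of the iterates; this is genuinely additional work beyond \citet{haarnoja18} and is not addressed anywhere in the present paper either.

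One small point worth flagging: the operator you derive, $G \mapsto \mathbb{E}_{s'}\mathbb{E}_{a'\sim\pi^{\text{ave}}_{s'}}[C^*_\tau(s',a') + G(s',a')]$, does not literally match the displayed fixed-point map in the statement, which has a leading factor of $\tau$ and a minus sign in front of $F$. Your derivation is the correct one; the discrepancy appears to be a transcription artefact in the paper's restatement of the result (in the original discounted formulation the recursion carries a $\gamma$, not a $\tau$, and the sign is $+$). You were right to hedge with ``up to the normalisation conventions'', but it would strengthen the write-up to say explicitly that the stated operator is almost certainly mis-transcribed and to record the corrected form.
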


We believe that the low-temperature result from Lemma \ref{lem:2} can be used to obtain similar results for the standard RL framework.
We provide empirical evidence of this in the next section, and leave a formal proof to future work.

\section{Experiments}

To demonstrate composition, we perform a series of experiments in a grid-world video game (Figure~\ref{fig:map}).
The goal of the game is to collect items of different colours and shapes. 
The agent has four actions that move it a single step in any of the cardinal directions, unless it collides with a wall.
Each object in the domain is one of two shapes (squares and circles), and one of three colours (blue, beige and purple), for a total of six objects (see Figure~\ref{fig:items}).

\begin{figure}[h!]
	\centering
   	\begin{subfigure}[t]{0.45\linewidth}
   		\centering
    	\includegraphics[height=3.0cm]{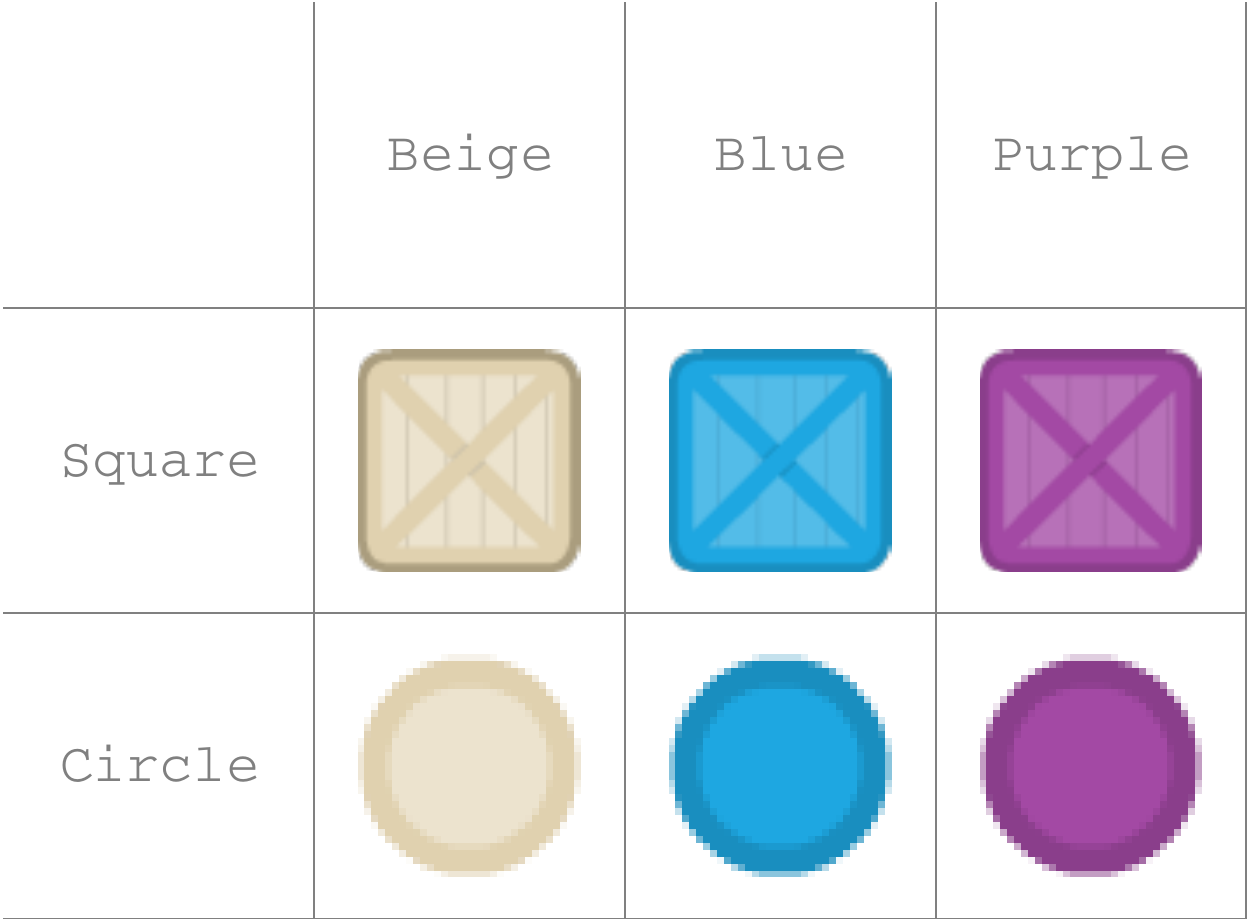}
    	\caption{Items to be collected.} \label{fig:items}
	\end{subfigure}%
	~~	
	\begin{subfigure}[t]{0.45\linewidth}
		\centering
      	\includegraphics[height=3.0cm]{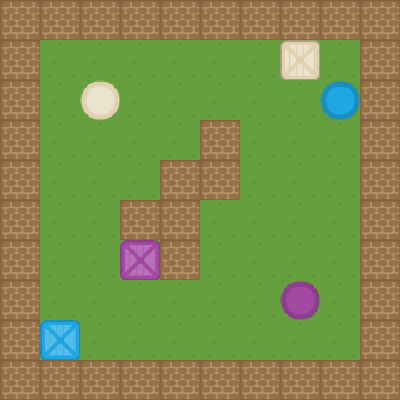}
      	\caption{Layout of the grid-world.} \label{fig:map}
	\end{subfigure}%
	\caption{}
\end{figure}

We construct a number of different tasks based on the objects that the agent must collect, the task's name specifying the objects to be collected. 
For example, \texttt{Purple} refers to the task where an agent must collect any purple object, while \texttt{BeigeSquare} requires collecting the single beige square.

\begin{figure*}[b!]
	\centering
   	\begin{subfigure}[m]{0.33\textwidth}
   		\centering
    	\includegraphics[height=3.4cm]{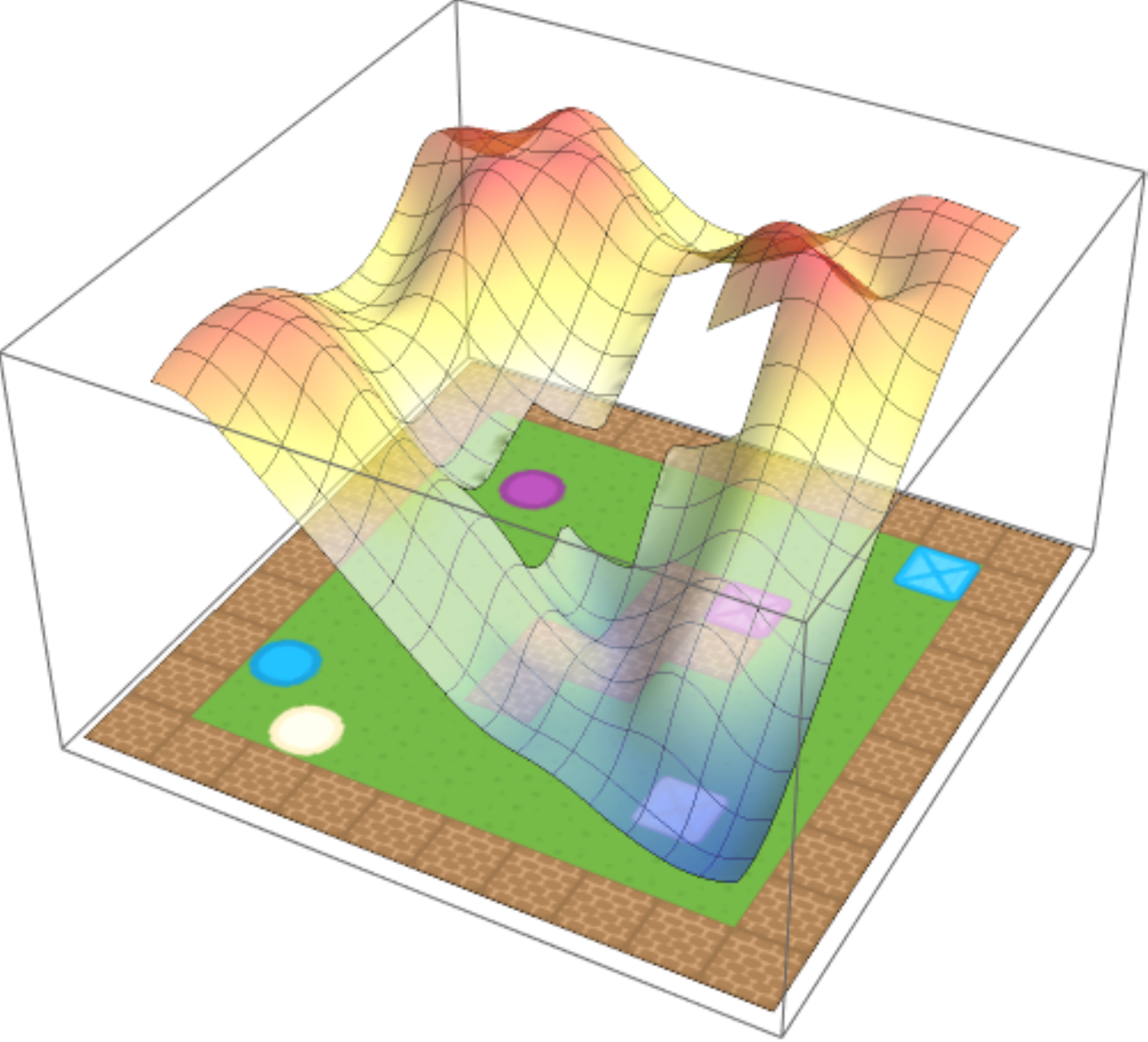}
    	\caption{} \label{fig:value_or}
	\end{subfigure}%
	~~	
	\begin{subfigure}[m]{0.33\textwidth}
		\centering
      	\includegraphics[height=3.0cm]{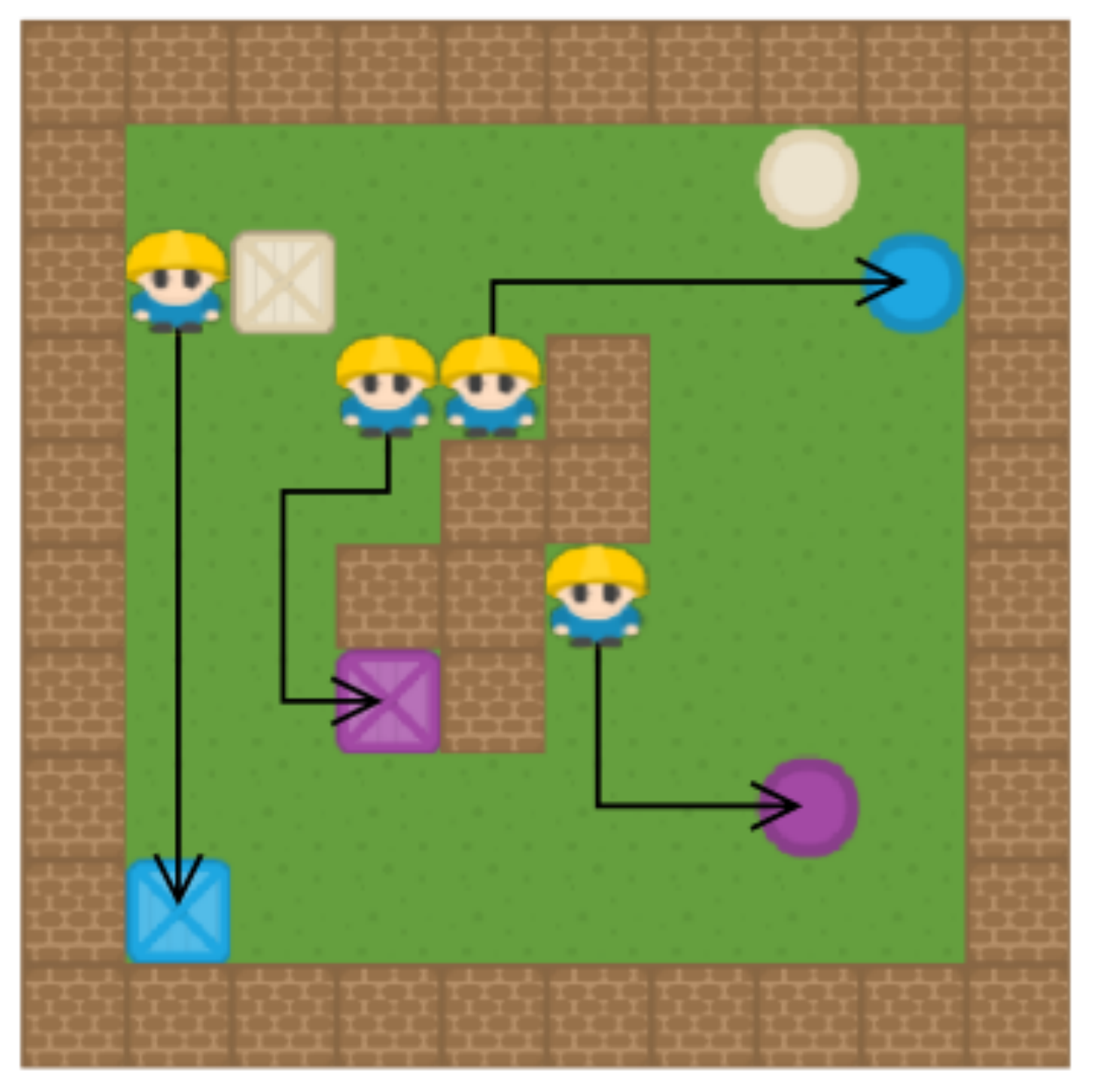}
      	\caption{} \label{fig:trajectories_or}
	\end{subfigure}%
	\begin{subfigure}[m]{0.33\textwidth}
		\centering
      	\includegraphics[height=3.4cm]{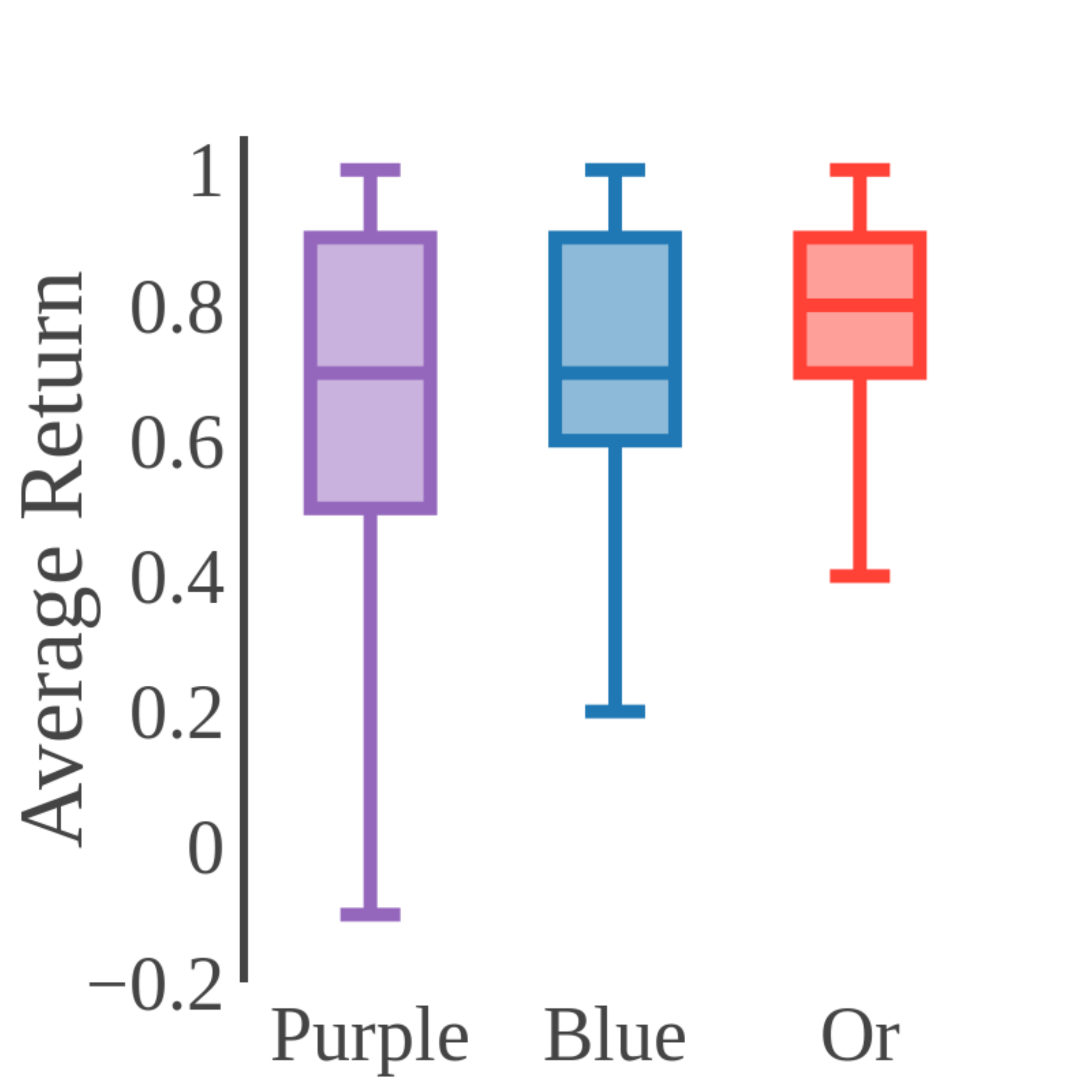}
  	    \caption{} \label{fig:rewards_or}
	\end{subfigure}%
	\caption{(\subref{fig:value_or}) The optimal value function for \texttt{PurpleOrBlue}, which is produced by composing the \texttt{Purple} and \texttt{Blue} $Q$-functions. The multimodality of the composite value function is clearly visible. (\subref{fig:trajectories_or}) Sample trajectories for the composite \texttt{PurpleOrBlue} task, with the agent beginning at different positions. The agent selects the shortest path to any of the target objects. (\subref{fig:rewards_or}) Returns from $50$k episodes. The first two box plots are the results of acting in the \texttt{PurpleOrBlue} task using only one of the base $Q$-functions, while the third uses the composite $Q$-function.}\label{fig:or}
\end{figure*}

For each task, the episode begins by randomly positioning the six objects and the agent. 
At each timestep, the agent receives a reward of $-0.1$. 
If the correct object is collected, the agent receives a reward of $1$ and the episode terminates.
We first learn to solve a number of base tasks using (soft) deep $Q$-learning \citep{mnih15,schulman17}, where each task is trained with a separate network. 
The resulting networks are collected into a library from which we will later compose new $Q$-functions.

The input to our network is a single RGB frame of size $84 \times 84$, which is passed through three convolutional layers and two fully-connected layers before outputting the predicted Q-values for the given state.
Using the results in Section~\ref{sec:compositionality}, we compose optimal $Q$-functions from those in the library.

\subsection{--OR-- Composition} 
 
Here we consider new tasks that can be described as the union of a set of base tasks in the standard RL setting. 
We train an agent separately on the \texttt{Purple} and \texttt{Blue} tasks, adding the corresponding $Q$-functions to our library. 
We use Corollary \ref{cor:1} to produce the optimal $Q$-function for the composite  \texttt{PurpleOrBlue} task, which requires the agent to pick up either blue or purple objects, without any further learning.  
Results are given in Figure~\ref{fig:or}.
 
The local maxima over blue and purple objects illustrates the multimodality of the value function (Figure~\ref{fig:value_or}).
This is similar to approaches such as soft $Q$-learning \citep{haarnoja17}, which are also able to learn  multimodal policies.
However, we have observed that directly learning a truly multimodal policy for the composite task can be difficult.
If the entropy regularisation is too high, the resulting policy is extremely stochastic.
Too low a temperature results in a loss of multimodality, owing to winner's curse.
It is instead far easier to learn unimodal value functions for each of the base tasks, and then compose them to produce optimal multimodal value functions.   
 

\begin{figure*}[t!]
	\centering
   	\begin{subfigure}[m]{0.23\textwidth}
   		\centering
    	\includegraphics[height=3.2cm]{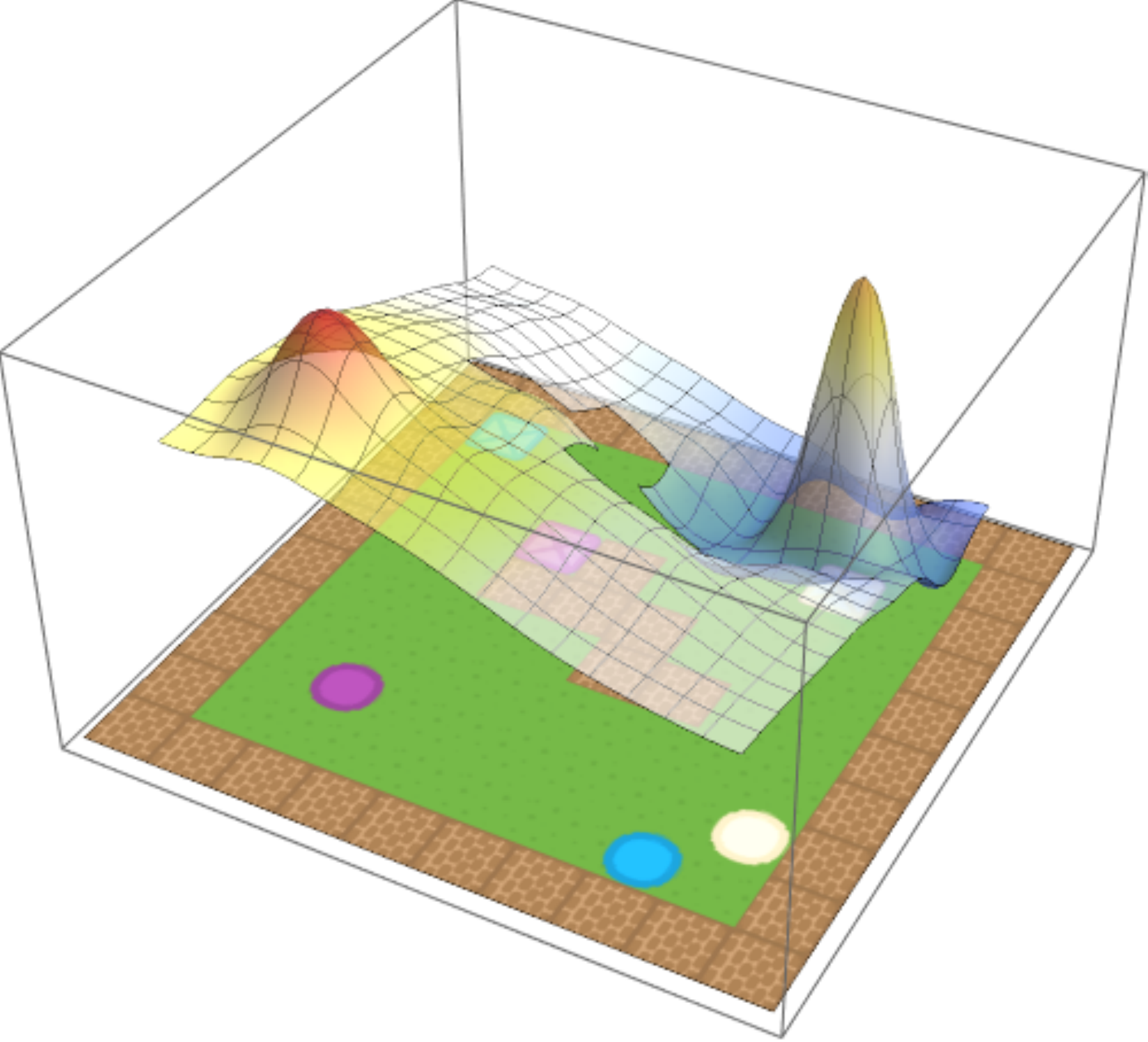}
    	\caption{\texttt{BeigeSquare}: $0.0$} \label{fig:plot_weighted_or_000}
	\end{subfigure}%
	~~	
	\begin{subfigure}[m]{0.23\textwidth}
		\centering
      	\includegraphics[height=3.2cm]{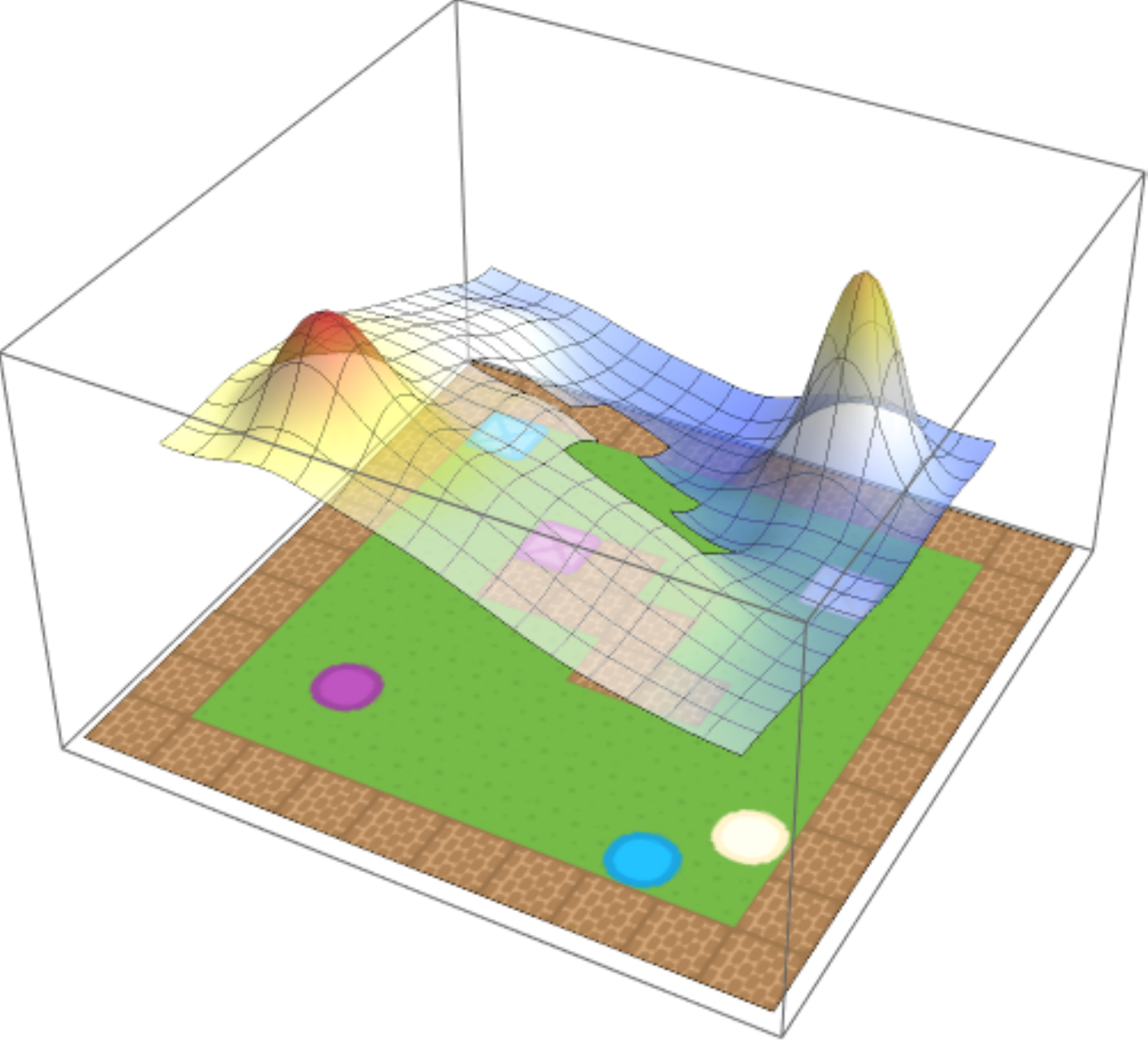}
      	\caption{\texttt{BeigeSquare}: $0.05$} \label{fig:plot_weighted_or_005}
	\end{subfigure}%
	\begin{subfigure}[m]{0.23\textwidth}
		\centering
      	\includegraphics[height=3.2cm]{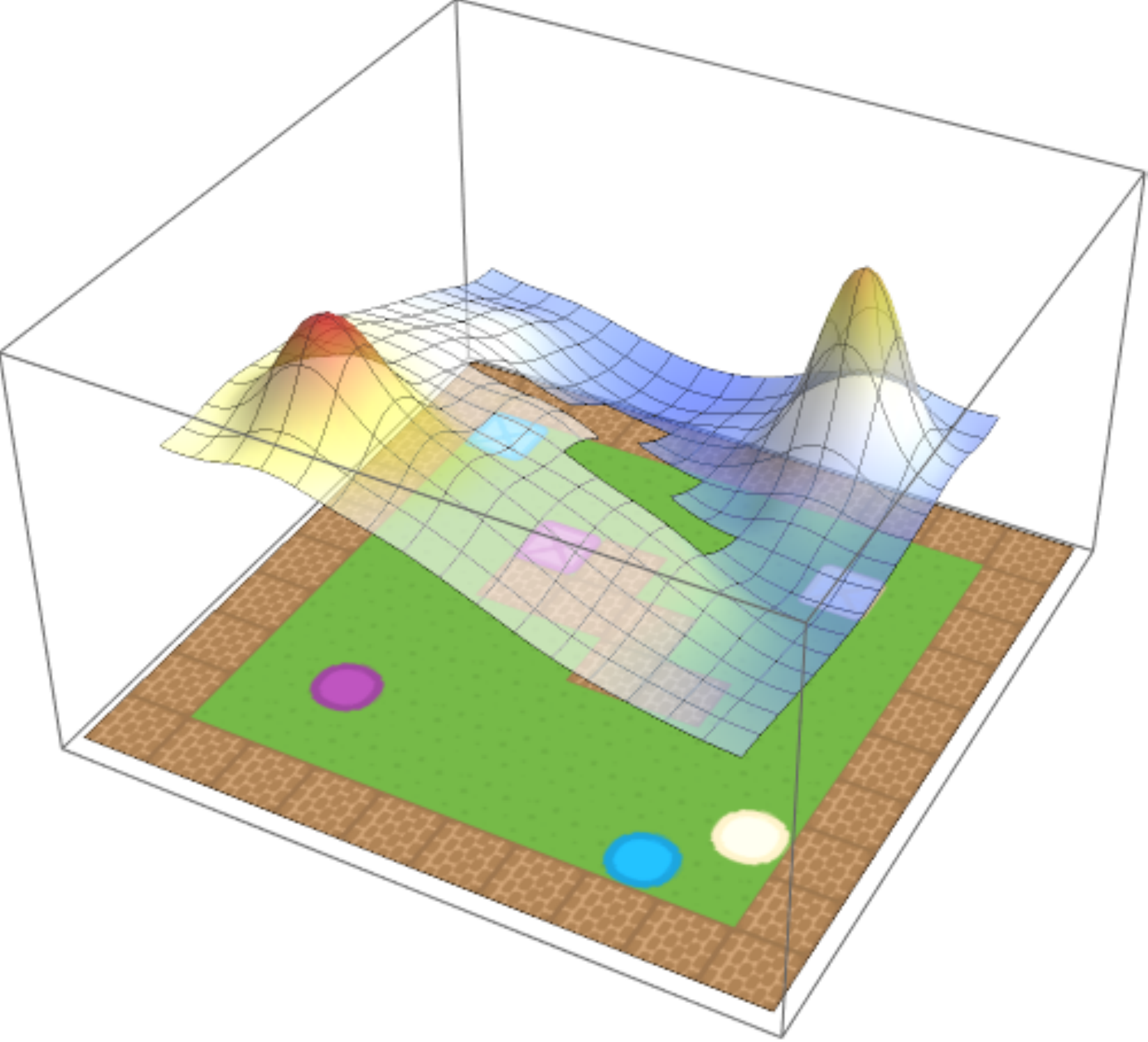}
  	    \caption{\texttt{BeigeSquare}: $0.1$} \label{fig:plot_weighted_or_010}
	\end{subfigure}
	\begin{subfigure}[m]{0.23\textwidth}
		\centering
      	\includegraphics[height=3.2cm]{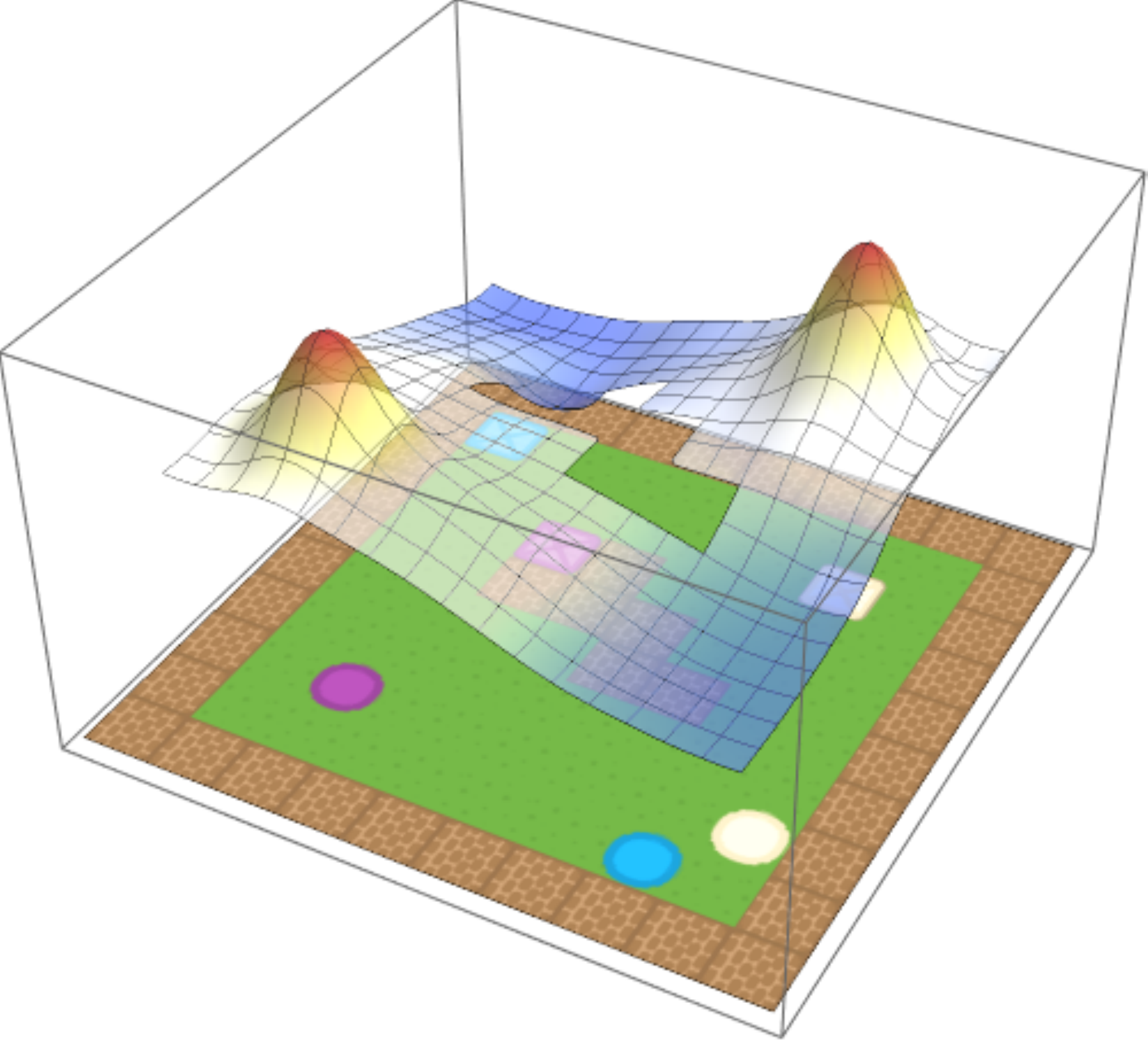}
  	    \caption{\texttt{BeigeSquare}: $0.5$} \label{fig:plot_weighted_or_050}
	\end{subfigure}
	\\
	   	\begin{subfigure}[m]{0.23\textwidth}
   		\centering
    	\includegraphics[height=3.2cm]{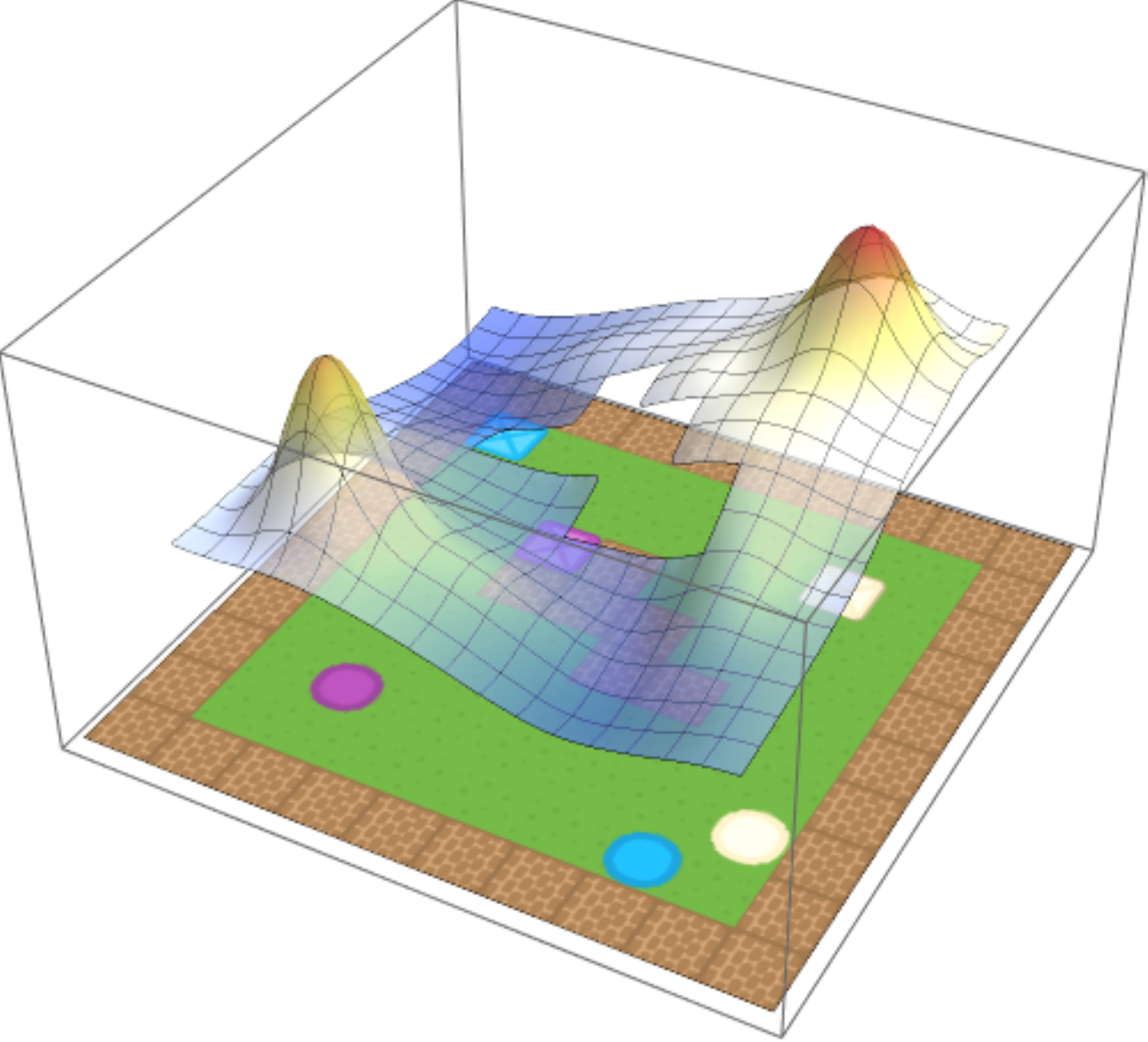}
    	\caption{\texttt{BeigeSquare}: $0.9$} \label{fig:plot_weighted_or_090}
	\end{subfigure}%
	~~	
	\begin{subfigure}[m]{0.23\textwidth}
		\centering
      	\includegraphics[height=3.2cm]{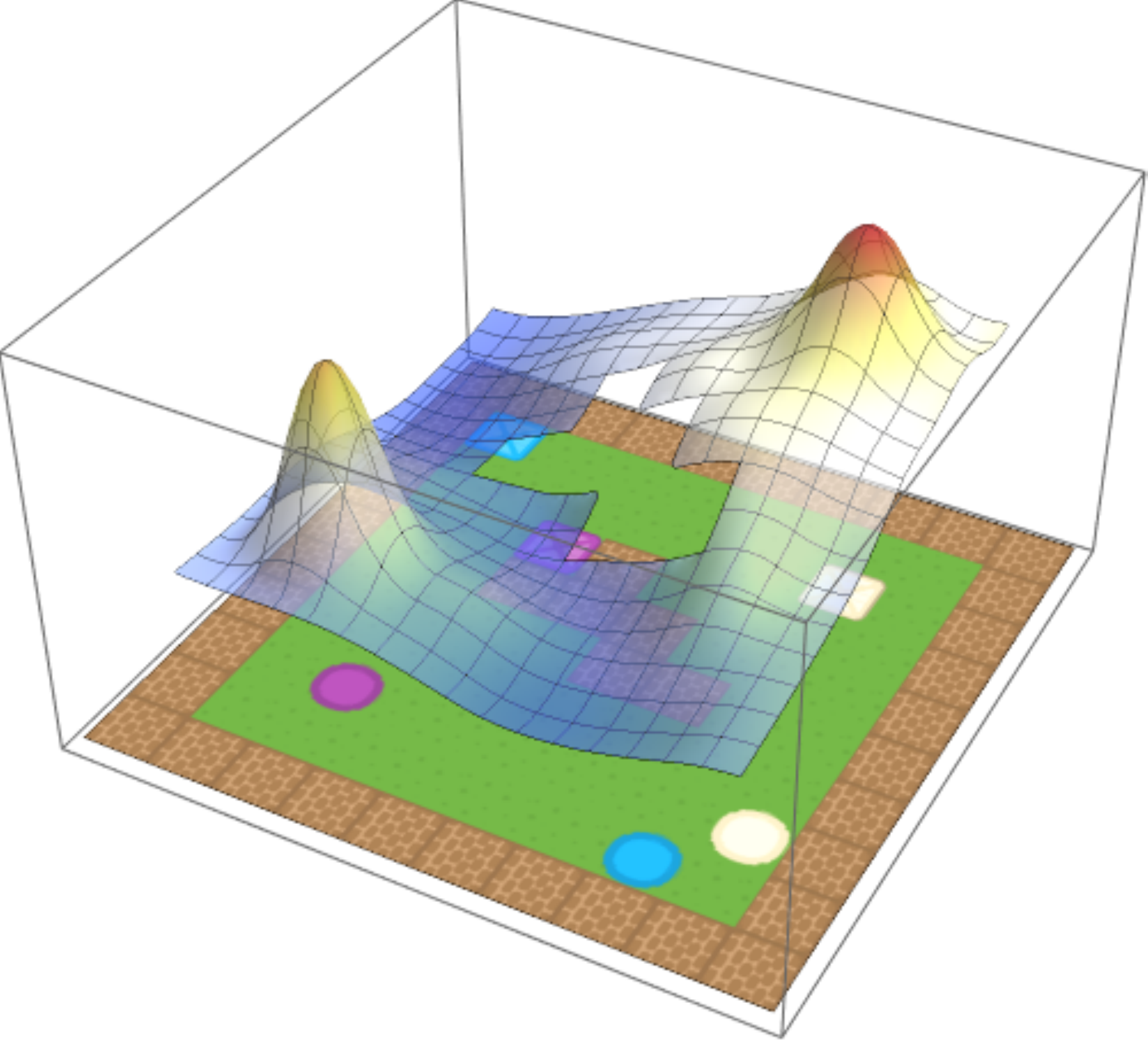}
      	\caption{\texttt{BeigeSquare}: $0.95$} \label{fig:plot_weighted_or_095}
	\end{subfigure}%
	\begin{subfigure}[m]{0.23\textwidth}
		\centering
      	\includegraphics[height=3.2cm]{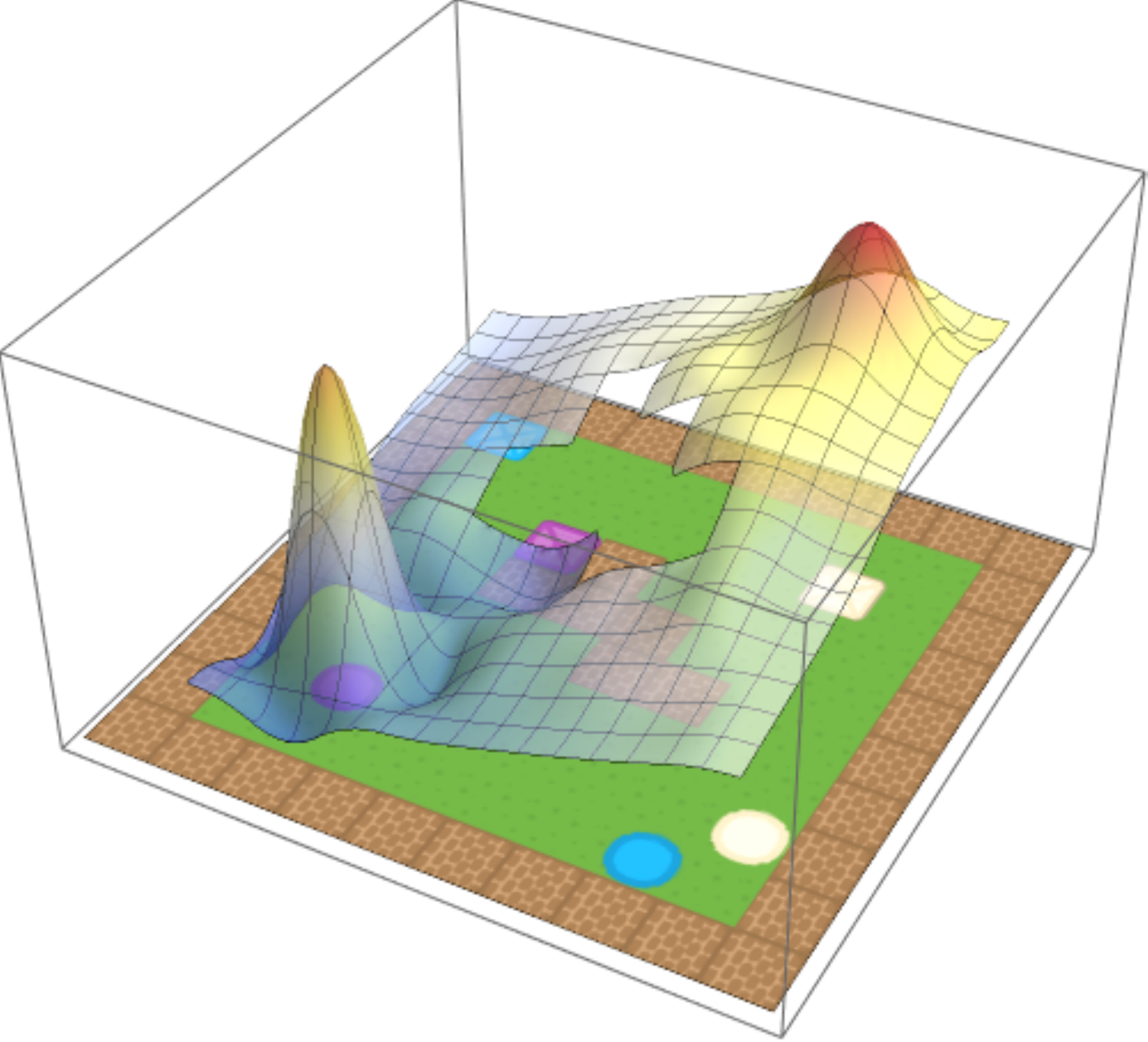}
  	    \caption{\texttt{BeigeSquare}: $1.0$} \label{fig:plot_weighted_or_100}
	\end{subfigure}%
	\begin{subfigure}[m]{0.23\textwidth}
		\centering
      	\includegraphics[height=3.2cm]{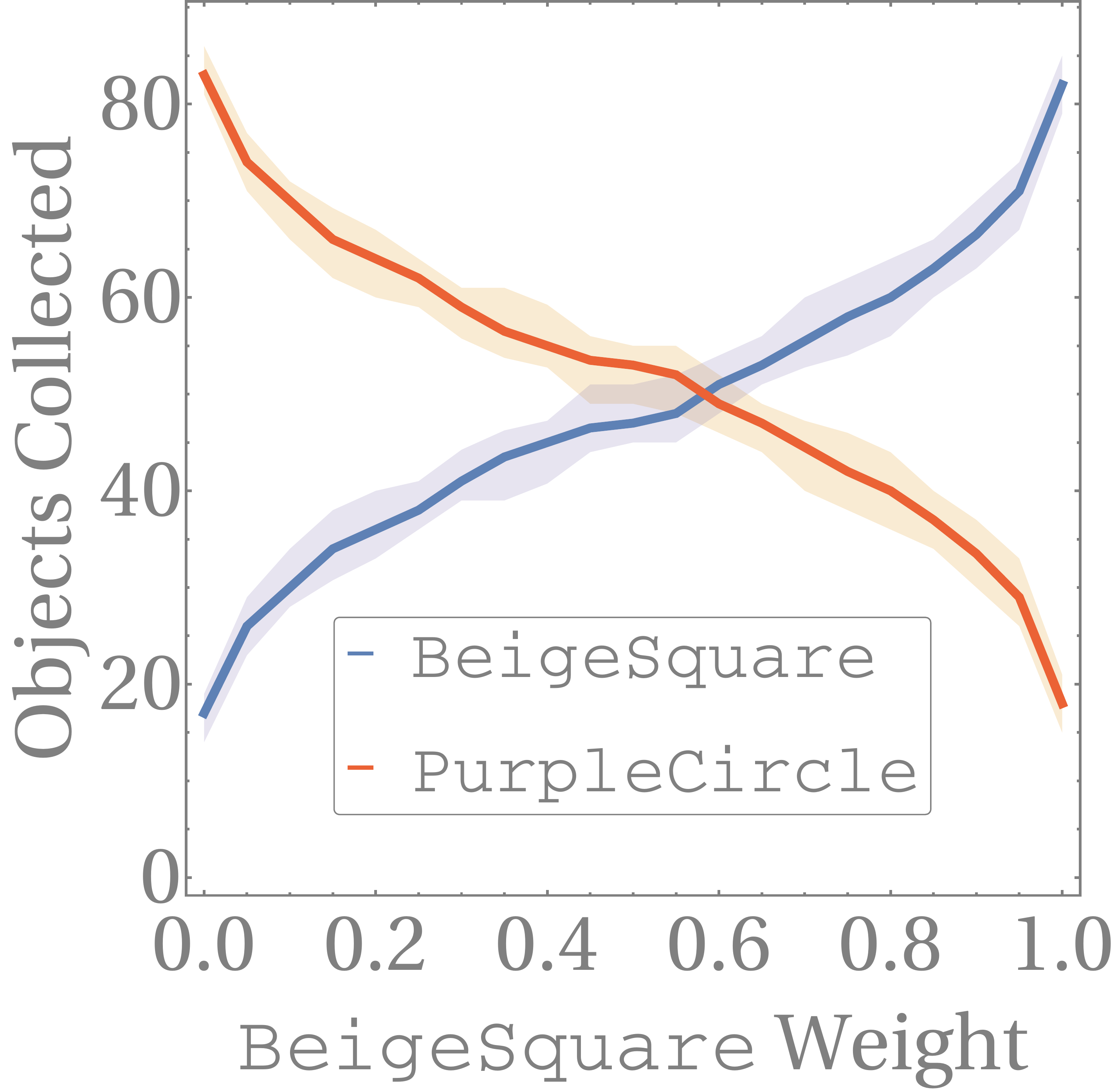}
  	    \caption{} \label{fig:weighted_or_collected}
	\end{subfigure}%

	\caption{(\subref{fig:plot_weighted_or_000}--\subref{fig:plot_weighted_or_100}) Weighted composed value function for the task \texttt{BeigeSquareOrPurpleCircle}.  The weight assigned to the $Q$-function for \texttt{BeigeSquare} is varied from $0$ to $1$. (\subref{fig:weighted_or_collected}) The number of beige squares compared to purple circles collected by the agent as the weights are varied in steps of $0.05$. Results for each weight were averaged over $80$ runs of $100$ episodes.}\label{fig:weighted_or}
\end{figure*}

\begin{figure*}[b!]
	\centering
   	\begin{subfigure}[m]{0.33\textwidth}
   		\centering
    	\includegraphics[height=3.4cm]{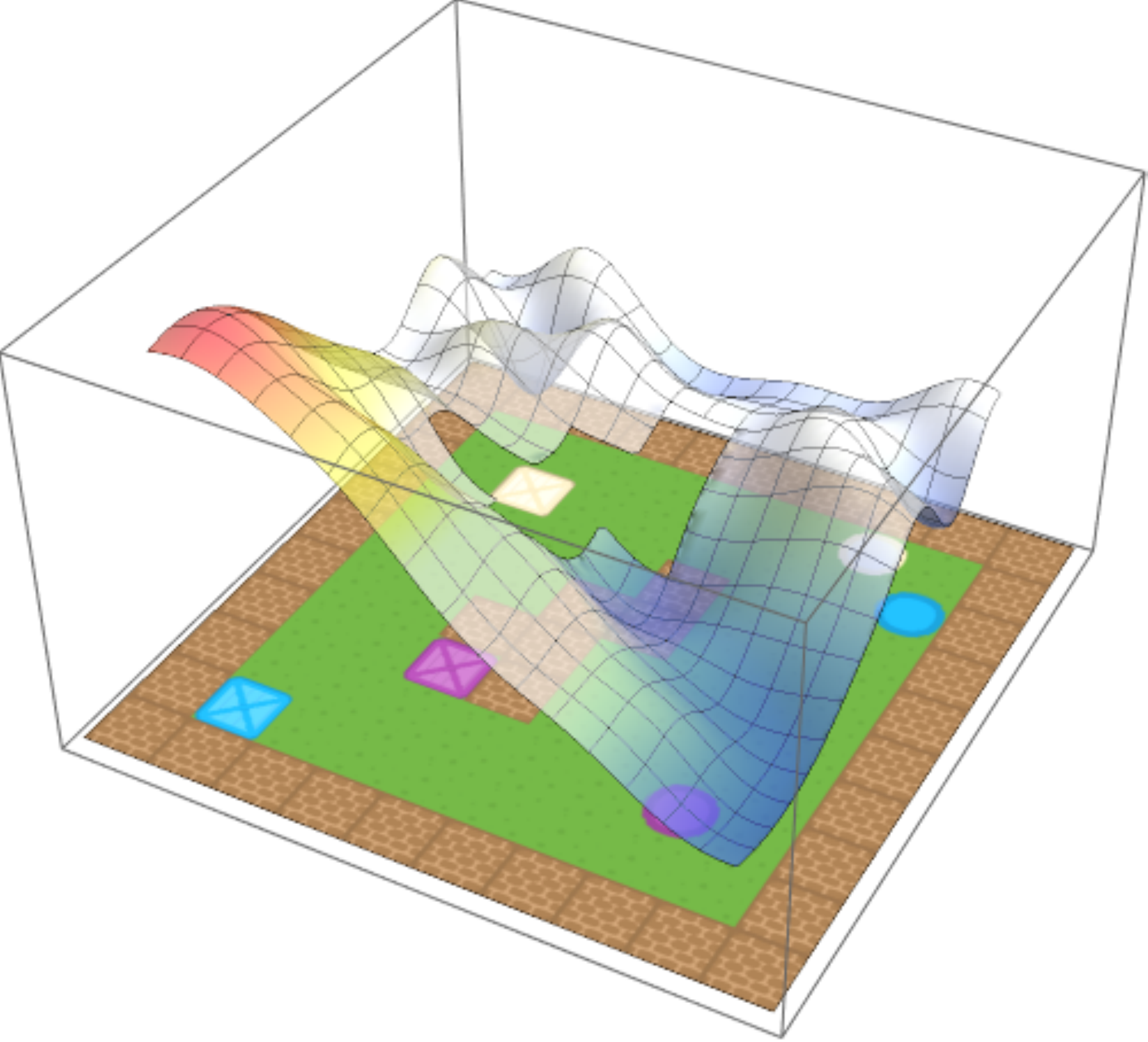}
    	\caption{} \label{fig:value_and}
	\end{subfigure}%
	~~	
	\begin{subfigure}[m]{0.33\textwidth}
		\centering
      	\includegraphics[height=3.0cm]{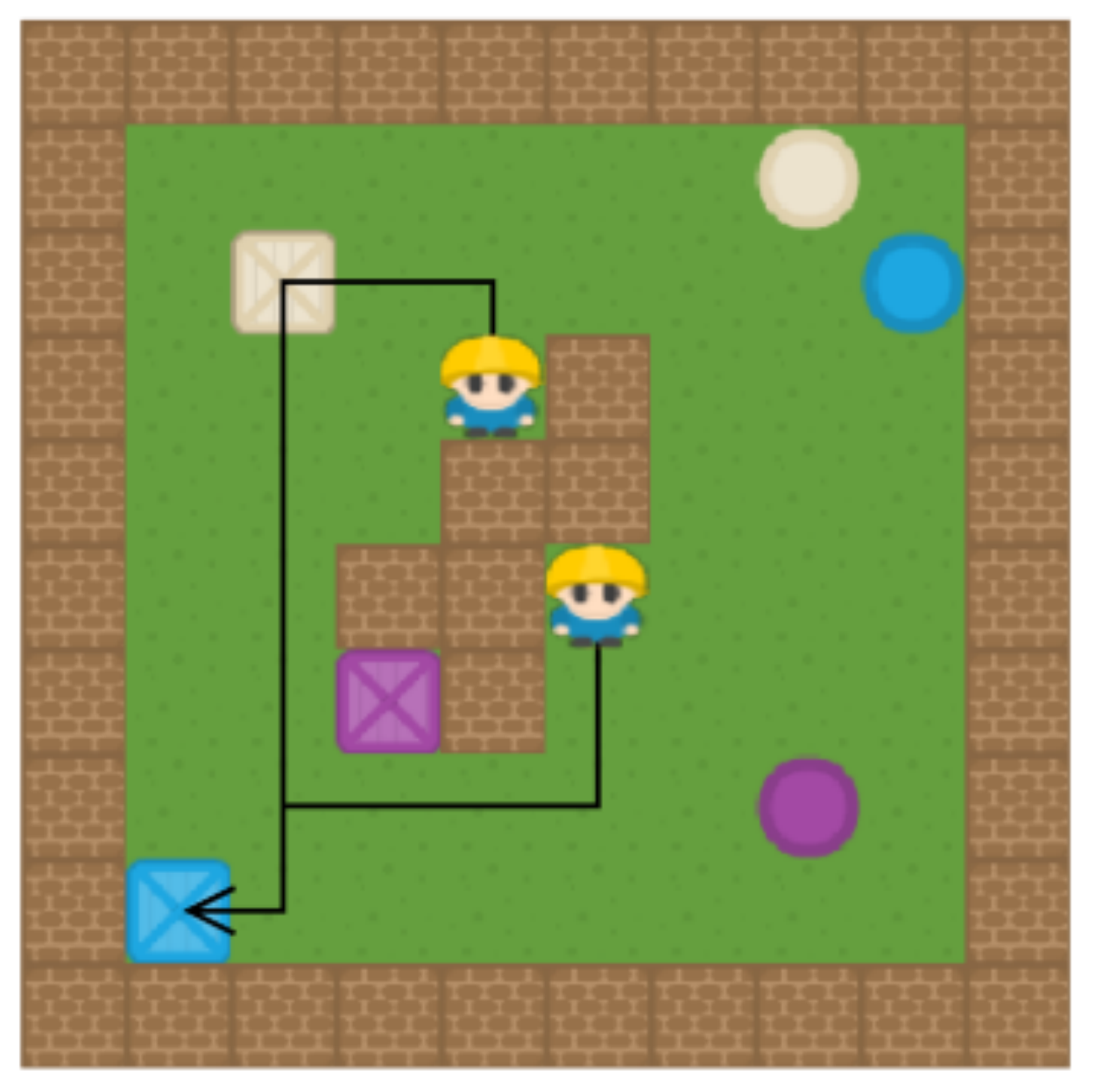}
      	\caption{} \label{fig:trajectories_and}
	\end{subfigure}%
	\begin{subfigure}[m]{0.33\textwidth}
		\centering
      	\includegraphics[height=3.4cm]{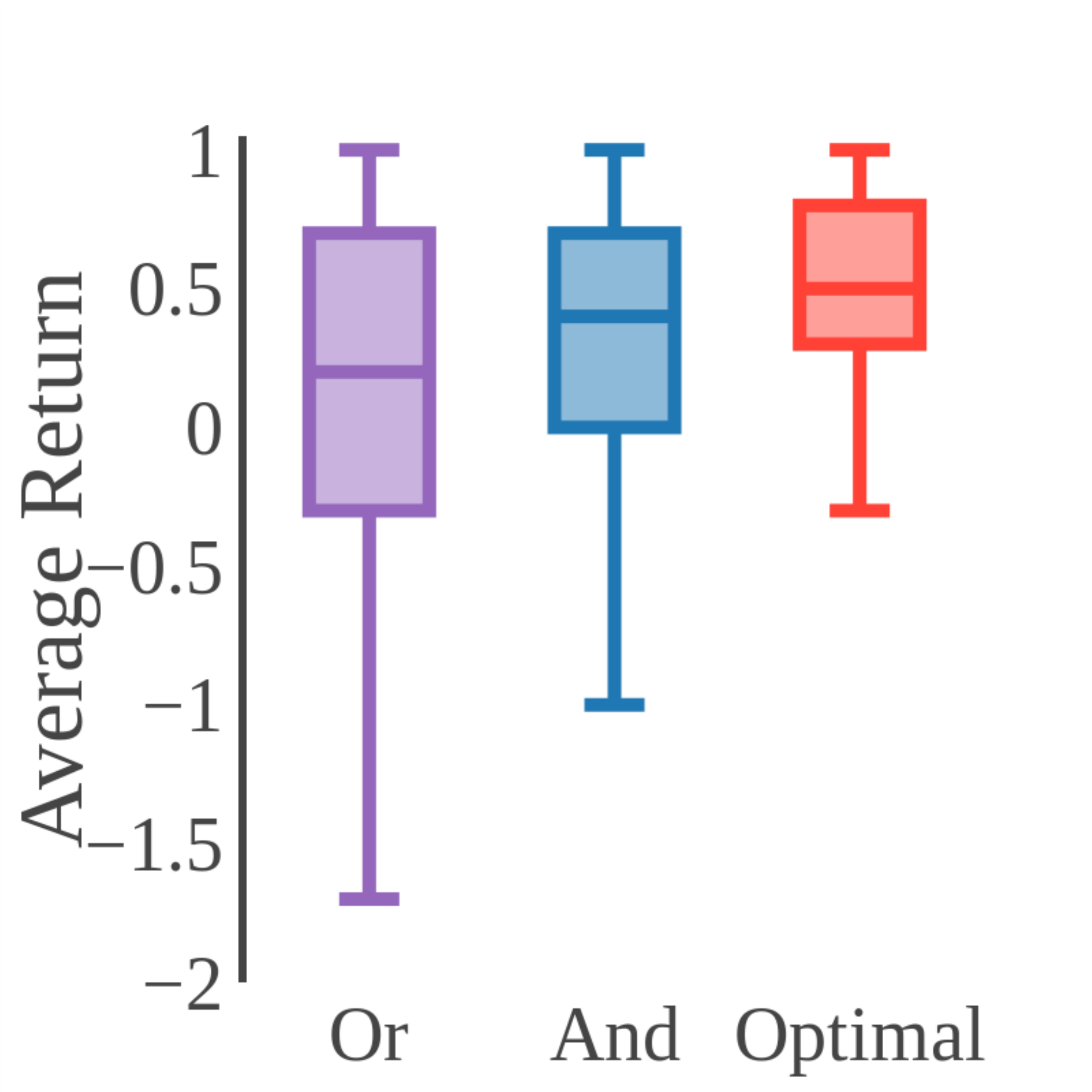}
  	    \caption{} \label{fig:rewards_and}
	\end{subfigure}%
	\caption{(\subref{fig:value_and}) The approximately optimal value function of the composed policies. Local optima are clearly visible. (\subref{fig:trajectories_and})  Sample trajectories from the composed policy beginning from different starting positions. The agent exhibits suboptimal, but sensible behaviour near beige squares. (\subref{fig:rewards_and}) The IQR of returns from $50$k episodes. The first box plot is the return from the optimal solution to the \textit{union} of tasks, the second is the result of the approximate \textit{intersection} of tasks, and the third is the true optimal policy.} \label{fig:and}
\end{figure*}

\subsection{Linear Task Combinations}

In Theorem~\ref{thm:compose} we showed that in the entropy-regularised setting, the composed $Q$-function is dependent on a weight vector $\mathbf{w}$.
This allows us to achieve a more general type of composition. 
In particular, we can immediately compute any optimal $Q$-function that lies in the ``span'' of the library $Q$-functions.
Indeed, according to Theorem~\ref{thm:compose} the exponentiated optimal $Q$-function is a linear combination of the exponentiated library functions.  
Therefore, the weights can be used to modulate the relative importance of the library functions---modelling the situation in which an agent has multiple concurrent objectives of unequal importance.

We illustrate the effect of the weight vector $\mathbf{w}$ using soft $Q$-learning with a temperature parameter $\tau = 1$.
We construct a new task by composing the tasks \texttt{PurpleCircle} and \texttt{BeigeSquare}, and assign different weights to these tasks. 
The different weighted value functions are given in Figure~\ref{fig:weighted_or}. 


\subsection{--AND-- Composition}

Here we consider tasks which can be described as the intersection of tasks in the library.
In general, this form of composition will not yield an optimal policy for the composite task owing to the presence of local optima in the composed value function.
However, in many cases we can obtain a good approximation to the composite task by simply averaging the Q-values for the constituent tasks.  
While \citet{haarnoja18} considers this type of composition in the entropy-regularised case, we posit that this can be extended to the standard RL setting by taking the low-temperature limit.
We illustrate this by composing the optimal policies for the \texttt{Blue} and \texttt{Square} tasks, which produces a good approximation to the optimal policy for collecting the blue square.  
Results are shown in Figure~\ref{fig:and}.

\subsection{Temporal}

Our final experiment demonstrates the use of composition to long-lived agents. 
We compose the base $Q$-functions for the tasks \texttt{Blue}, \texttt{Beige} and \texttt{Purple}, and use the resulting  $Q$-function to solve the task of collecting \textit{all} objects. Sample trajectories are illustrated by Figure~\ref{fig:temporal}.

Despite the fact that the individual tasks terminate after collecting the required object, if we allow the episode to continue, the composed $Q$-function is able to collect all objects in a greedy fashion.
The above shows the power of composition---if we possess a library of skills learned from previous tasks, we can compose them to solve any task in their union continually. 

\begin{figure*}[t!]
	\centering
   	\begin{subfigure}[m]{0.33\textwidth}
   		\centering
    	\includegraphics[height=3.0cm]{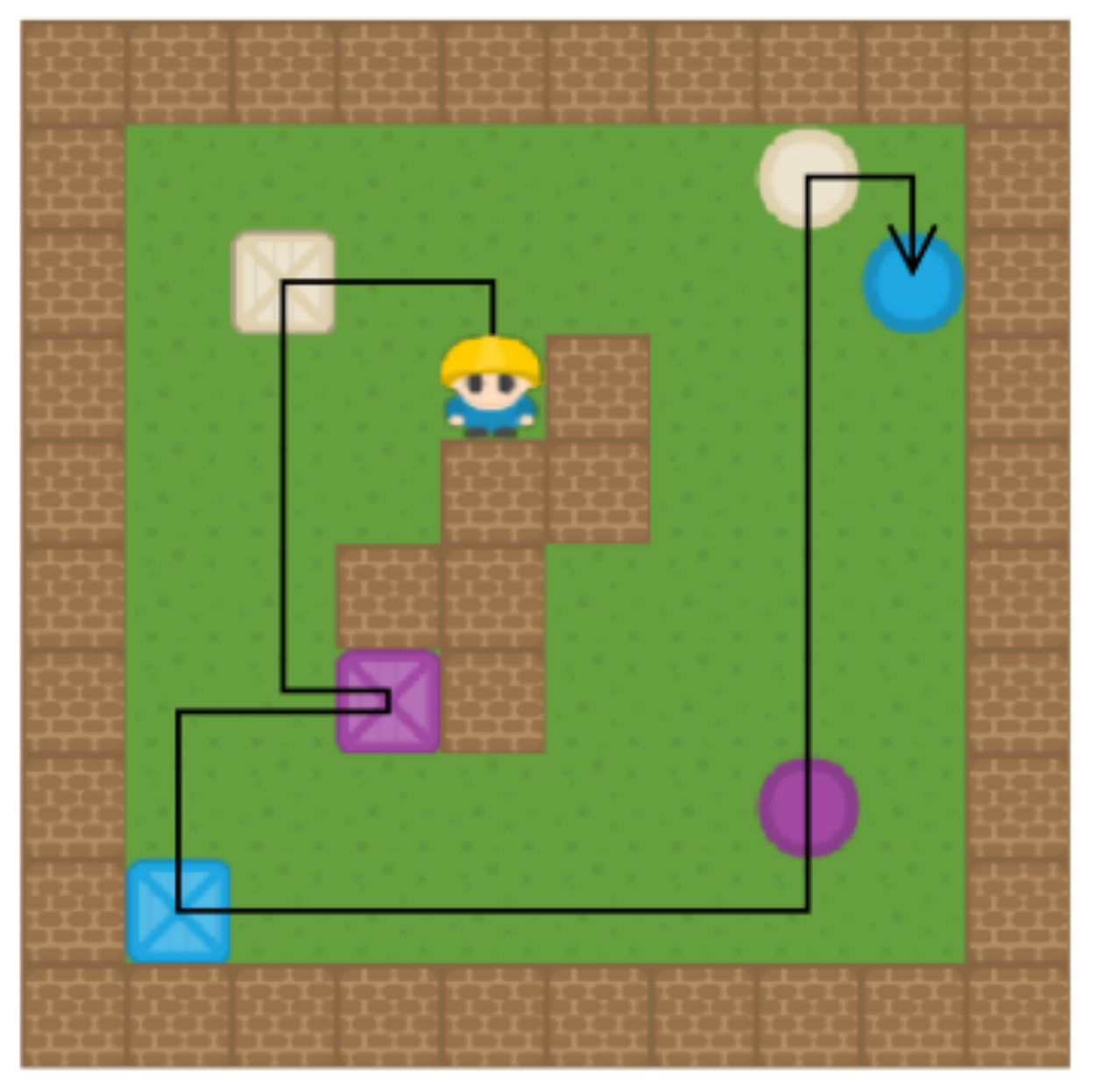}
    	\caption{} \label{fig:trajectories_temporal_1}
	\end{subfigure}%
	~~	
	\begin{subfigure}[m]{0.33\textwidth}
		\centering
      	\includegraphics[height=3.0cm]{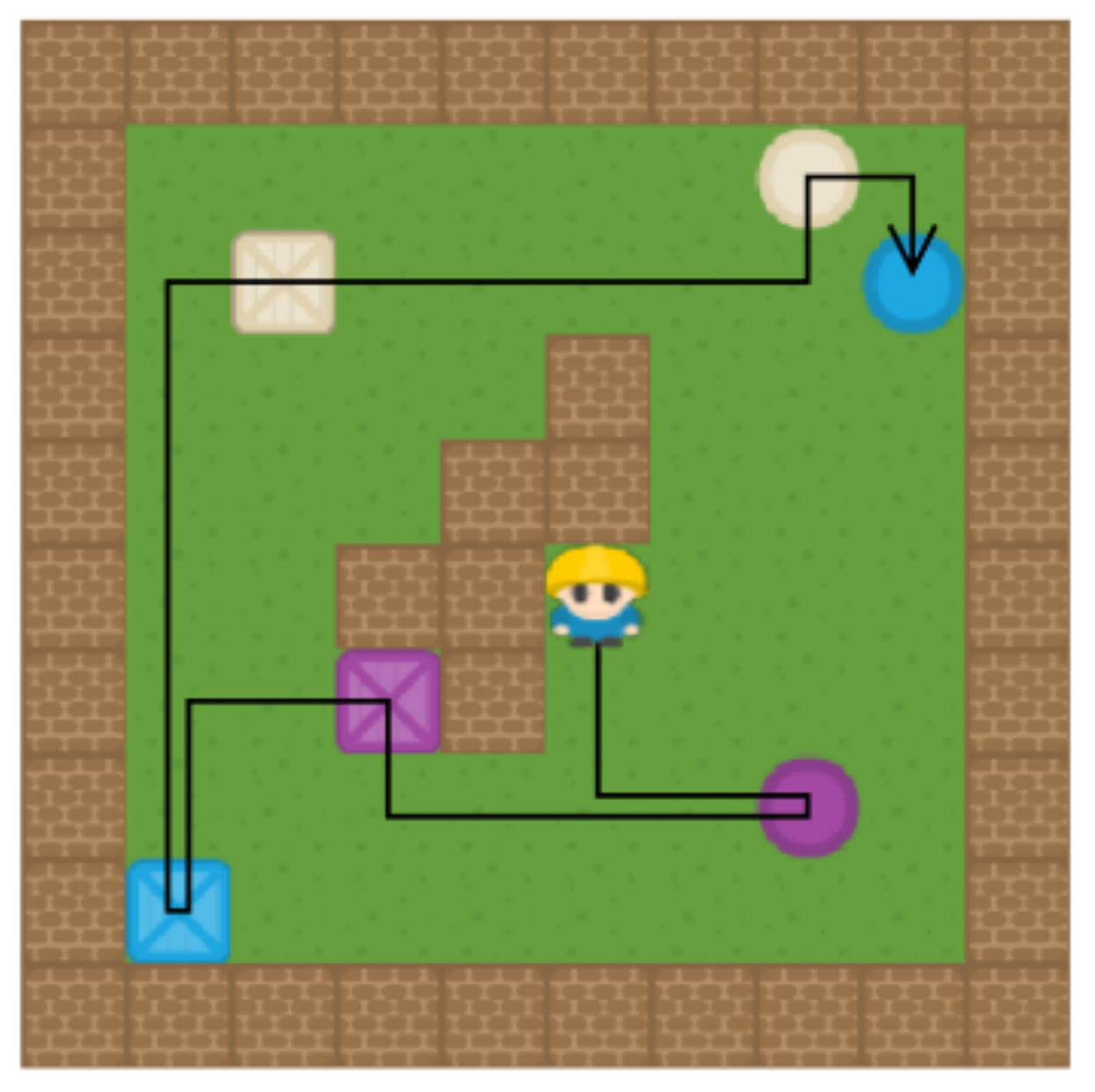}
      	\caption{} \label{fig:trajectories_temporal_2}
	\end{subfigure}%
	\begin{subfigure}[m]{0.33\textwidth}
		\centering
      	\includegraphics[height=3.4cm]{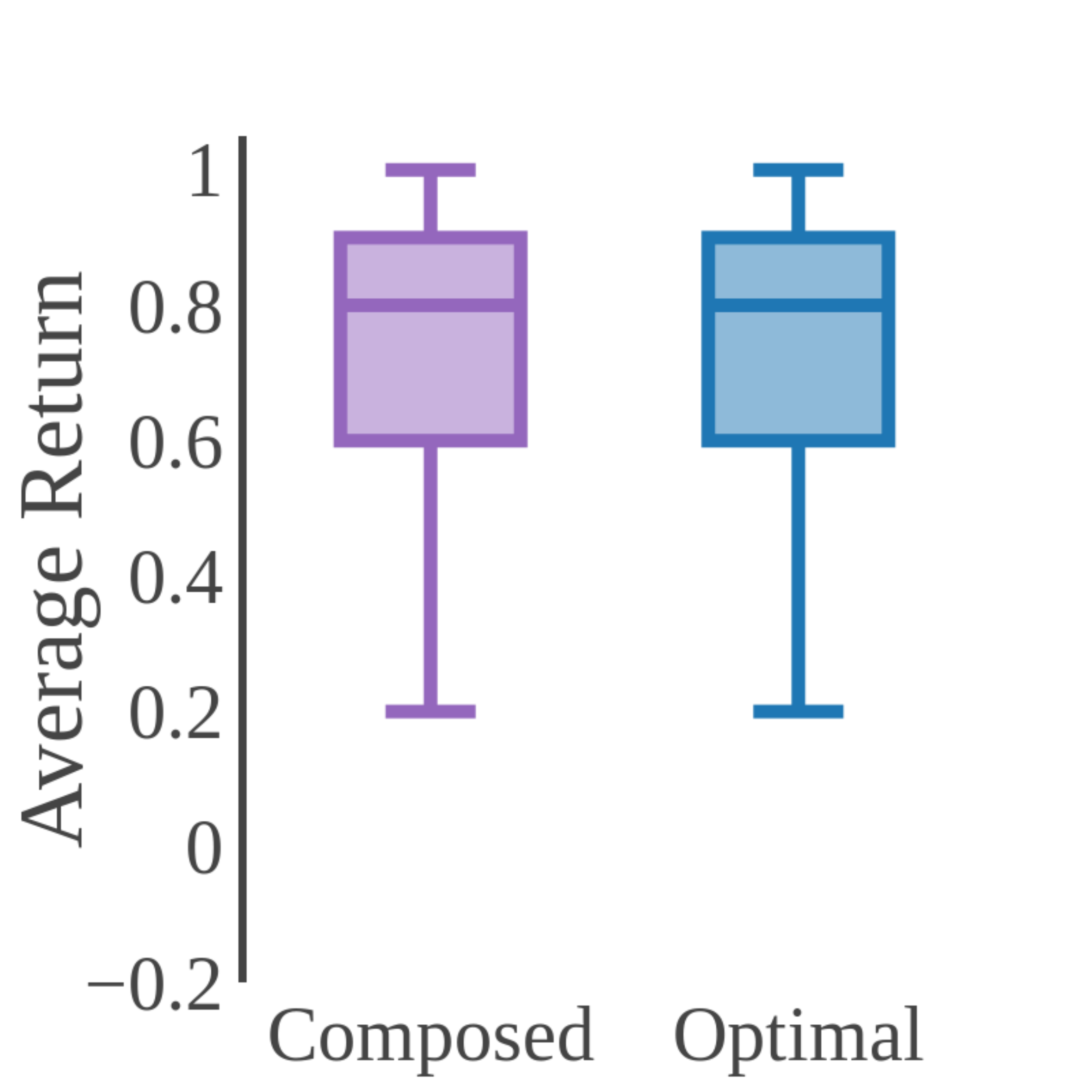}
  	    \caption{} \label{fig:rewards_temporal}
	\end{subfigure}%
	\caption{(\subref{fig:trajectories_temporal_1}) and (\subref{fig:trajectories_temporal_2}) Sample trajectories for the task of collecting all objects. (\subref{fig:rewards_temporal}) Returns from $50$k episodes. The first box plot is the return of the composed $Q$-function, while the second is the result of DQN trained to collect all objects explicitly.} \label{fig:temporal}
\end{figure*}

\section{Conclusion}

We showed that in entropy-regularised RL, value functions can be optimally composed to solve the union of tasks. 
Extending this result by taking the low-temperature limit, we showed that composition is also possible in standard RL.
However, there is a trade-off between our ability to smoothly interpolate between tasks, and the stochasticity of the optimal policy. 
We demonstrated, in a high-dimensional environment, that a library of optimal $Q$-functions can be composed to solve composite tasks consisting of unions, intersections or temporal sequences of simpler tasks.
The proposed compositional framework is a step towards lifelong-learning agents that are able to combine existing skills to solve new, unseen problems.

\bibliography{llarla18_composition}
\bibliographystyle{icml2018}

\end{document}